\theoremstyle{plain}
\newtheorem{theorem}{Theorem}[section]
\newtheorem{lemma}[theorem]{Lemma}
\newtheorem{corollary}[theorem]{Corollary}
\theoremstyle{definition}
\theoremstyle{remark}
\begin{document}

\twocolumn[
\icmltitle{ThoughtProbe: Classifier-Guided Thought Space Exploration \\ Leveraging LLMs' Intrinsic Reasoning}



\icmlsetsymbol{equal}{*}

\begin{icmlauthorlist}
\icmlauthor{Zijian Wang}{yyy}
\icmlauthor{Chang Xu}{yyy}
\end{icmlauthorlist}

\icmlaffiliation{yyy}{School of Computer Science, The University of Sydney, NSW, Australia}

\icmlcorrespondingauthor{Zijian Wang}{zwan0998@uni.sydney.edu.au}
\icmlcorrespondingauthor{Chang Xu}{c.xu@sydney.edu.au}

\vskip 0.3in
]




\makeatletter\renewcommand{\Notice@String}{}\makeatother

\printAffiliationsAndNotice{} 


\begin{abstract}
Pre-trained large language models (LLMs) have been demonstrated to possess intrinsic reasoning capabilities that can emerge naturally when expanding the response space. 
However, the neural representation mechanisms underlying these intrinsic capabilities and approaches for their optimal utilization remain inadequately understood.
In this work, we make the key discovery that a simple linear classifier can effectively detect intrinsic reasoning capabilities in LLMs' activation space, particularly within specific representation types and network layers.
Based on this finding, we propose a classifier-guided search framework that strategically explore a tree-structured response space.
In each node expansion, the classifier serves as a scoring and ranking mechanism that efficiently allocates computational resources by identifying and prioritizing more thoughtful reasoning directions for continuation.
After completing the tree expansion, we collect answers from all branches to form a candidate answer pool. 
We propose a branch-aggregation selection method that marginalizes over all supporting branches by aggregating their thoughtfulness scores, thereby identifying the optimal answer from the pool.
Experimental results show that our framework's comprehensive exploration not only covers valid reasoning chains but also effectively identifies them, achieving significant improvements across multiple arithmetic reasoning benchmarks.


\end{abstract}

\section{Introduction}
\label{Introduction}

Pre-trained large language models (LLMs) often struggle to demonstrate robust reasoning capabilities, primarily due to their tendency to provide short intuitive responses rather than engaging in a deliberative reasoning process \cite{rae2021scaling}.
This limitation makes them particularly prone to errors when tackling tasks that demand multi-step logical reasoning, such as mathematical arithmetic \cite{yao2024tree, wei2022chain}.

Some works aim to enhance pre-trained LLMs' reasoning through approaches that require human involvement - whether designing prompts or tuning with annotated data \cite{zelikman2022star, hoffman2024training, zhang2024chain, wei2022chain, kojima2022large}. 
However, these methods passively inject human reasoning prior knowledge into LLMs rather than actively exploring and leveraging their intrinsic reasoning capabilities.

\begin{figure*}[ht]
\begin{center}
\centerline{\includegraphics[scale = 0.5]{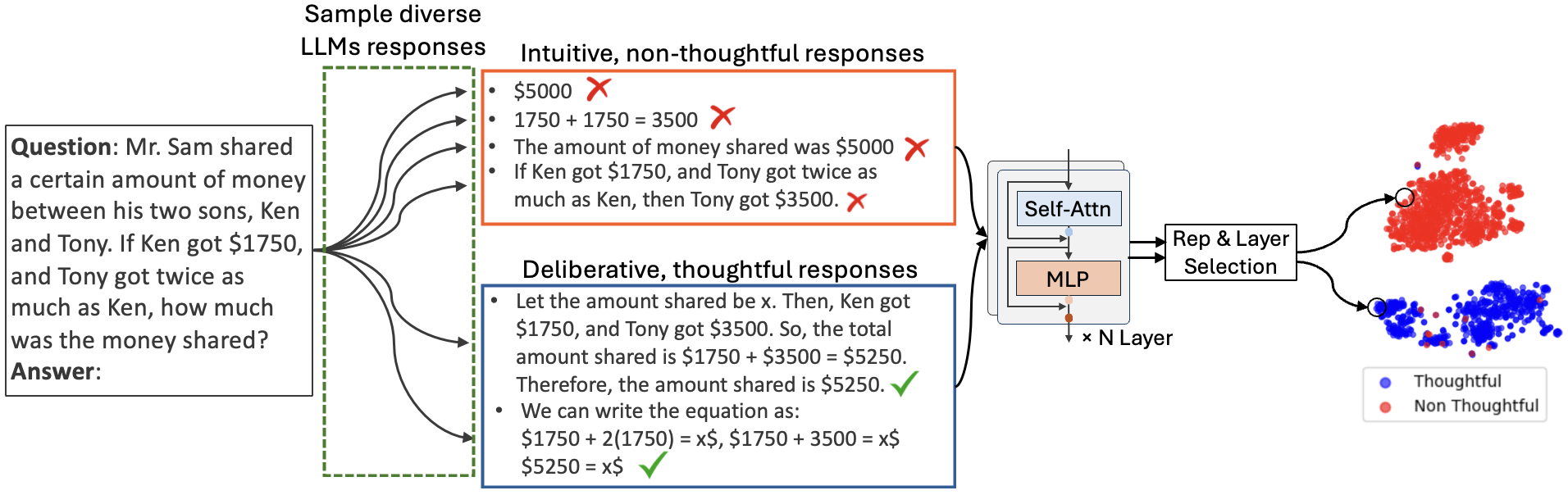}}
\caption{Pre-trained LLMs could naturally generate both thoughtful and non-thoughtful responses when sampling multiple times, with these responses being linearly separable in the activation space.}
\label{Intro}
\end{center}
\vspace{-2mm}
\end{figure*}

Recent evidence suggests that pre-trained LLMs may possess intrinsic reasoning capabilities even without explicit human intervention. 
When sampling multiple responses with only a question as prompt, lengthy reasoning chains can emerge naturally and could lead to correct answers \cite{wang2024chain, wang2022self}, as shown in Figure \ref{Intro}, indicating intrinsic reasoning abilities that single-pass inference fails to capture.
Building on this observation, recent frameworks have successfully employed search algorithms to explore response space and identify effective reasoning chains \cite{wu2024inference, luo2024improve, snell2024scaling, lightman2023let, wang2024math}, using various guiding signals from semantic analysis \cite{zhang2024thought} to LLM-based verbal evaluation \cite{xie2024self}.

However, key questions remain unexplored: How are reasoning capabilities represented within LLMs' internal activations, and how can these intrinsic abilities be effectively harnessed? 
Better understanding of these aspects could lead to more interpretable and reliable methods.

To address this issue, our research aim to investigate pre-trained LLMs intrinsic reasoning capabilities at hidden representation level.
We first introduce the concept of thoughtfulness - the degree to which a sampled response demonstrates deliberative reasoning content rather than short intuitive answers - as a proxy to study and leverage LLMs' intrinsic reasoning capabilities.
Our empirical evidence reveals that the Linear Representation Hypothesis (LRH) \cite{zou2310representation, von2024language} extends to thoughtfulness, showing that thoughtful and non-thoughtful responses are linearly separable in LLMs' activation space, as shown in Figure \ref{Intro}. 
Specifically, we find that this separability is most prominent in certain representation types and layer ranges, where a simple linear classifier can effectively distinguish thoughtful responses from non-thoughtful ones.
Also, we show the classifier consistently assigns higher scores to correct, thoughtful responses and lower scores to incorrect ones, regardless of response length.

Motivated by this discovery, we present ThoughtProbe, a classifier-guided inference-time computing framework that systematically leverages LLMs' intrinsic thoughtfulness for effective response space exploration. 
Our framework is built on the intuition that thoughtful responses are more likely to contain correct reasoning steps, thus increasing the probability of finding valid solution paths within the search space.
The exploration process is structured as an iterative tree search, with the initial question as the root node. 
At each step, multiple token sequences are generated as candidate child nodes, whose thoughtfulness score is evaluated by our classifier through probing hidden representations. 
By selecting more thoughtful candidates for continuation, we efficiently allocate computational resources and increase the likelihood of including correct reasoning trajectories in our exploration tree.
This exploration process continues until either reaching the termination token or exhausting the computational budget.

Upon completion of tree expansion, we obtain multiple branches, each leading to a candidate answer, forming a comprehensive answer pool. 
To determine the optimal one, we propose a branch-aggregation selection through value marginalization that considers thoughtfulness score across all branches leading to each candidate answer.
Specifically, the value of each answer is computed by aggregating the thoughtfulness score of all its supporting branches, with the final answer selected as the one that achieves the highest marginal value.

Experiments on multiple mathematical reasoning benchmarks demonstrate that our method consistently outperforms existing approaches, achieving significant improvements. 
Our work provides new insights into enhancing LLMs' reasoning capabilities and opens up promising directions for developing more robust reasoning systems.


\section{Preliminary}

\subsection{LLMs Architecture and Hidden Representation}

To provide a foundation for the discussion, we first describe the basic structure of a Transformer-based LLM architecture \cite{vaswani2017attention}. 
The input text is initially tokenized into a sequence of tokens, which are then mapped to embeddings to form the initial representation sequence $ \mathbf{x}^{(0)} \in \mathbb{R}^{T \times d_\text{emb}}$.
Here, $T$ is the sequence length, and $d_\text{emb}$ is the embedding dimension.

The embeddings are then processed through multiple Transformer layers.
In each layer $l$, its hidden states are composed of three components: activations from multi-head self-attention (MHA), multi-layer perceptron (MLP), and residual connections. This process can be formulated as:

\vskip -0.2in
\begin{align*}
\mathbf{a}^{(l)}_\text{attn} &= \text{MHA}(\mathbf{h}^{(l)}) & \text{(Attention activations)} \\
\mathbf{a}^{(l)}_\text{mlp} &= \text{MLP}(\mathbf{a}^{(l)}_\text{attn} + \mathbf{h}^{(l)})  & \text{(MLP activations)} \\
\mathbf{h}^{(l+1)} &= \mathbf{a}^{(l)}_\text{mlp} + \mathbf{a}^{(l)}_\text{attn} + \mathbf{h}^{(l)}  & \text{(Hidden states)}
\end{align*}

\subsection{LLMs Reasoning Structure}
Reasoning structures typically manifest in two fundamental topologies: sequential chains and branching trees. 
The chain structure reflects the step-by-step nature of logical deduction, while the tree structure captures the exploration of multiple potential reasoning paths. 
Below, we formally define these structures and their probabilistic formulations.

\textbf{Reasoning Chain} For an input question $Q$, a reasoning chain is defined as a sequence of intermediate thought steps $R = [Q, r_1, r_2, ..., r_N]$, leading to a final answer $A$. 
Here, $r_i$ represents an intermediate thought at the $i$-th step, and $N$ denotes the chain length. 
The answer can be extracted by appending a trigger prompt at the end of the chain, like ``Therefore, the answer is".
The probability of generating such a chain can be formalized as:
\begin{equation*}
   P(R, A|Q) = P(r_1|Q) \prod_{i=2}^N P(r_i|Q, r_{1:i-1}) \cdot P(A|Q, R)
\end{equation*}
where $P(r_1|Q)$ is the probability of the first step, $P(r_i|Q, r_{1:i-1})$ is the probability of the $i$-th step, and $P(A|Q, R)$ is the probability of the final answer. 
At each step $i$, a new thought $r_i$ is appended to form $R = [Q, r_1, ..., r_{i-1}, r_i]$.

More specifically, each reasoning step $r_i$ is itself a token sequence, which can be further decomposed as:

\begin{equation*}
P(r_i|Q, r_{1:i-1}) = \prod_{t=1}^{T_i} P(r_i^t|Q, r_{1:i-1}, r_i^{1:t-1})
\end{equation*}
where, $r_i^t$ denotes the $t$-th token in the $i$-th reasoning step, $T_i$ represents the total number of tokens in the $i$-th step, $r_i^{1:t-1}$ represents the previously generated tokens in the current step. 
In each token generation, the hidden representation $Rep(r_i^t)$ of token $r_i^t$ is accessible for probing.

\textbf{Branching Chains into Trees} 
By sampling diverse tokens at each reasoning step, a single chain can branch into a tree structure, where $Q$ serves as the root node, each node represents an intermediate reasoning step.
This tree-based expansion explores multiple reasoning branches simultaneously and can increase the probability of covering the correct reasoning chain and answer.

At each step $r_i$, we could sample $k$ different continuations:

\begin{equation*}
\{r_i^1, r_i^2, ..., r_i^k\} \sim P_k(r_i \mid Q, r_{1:i-1})
\end{equation*}

Here, $r_i^j$ represents the $j$-th sampled continuation at step $i$. 
Each root-to-leaf chain forms a distinct branch, leading to its answer, and collectively these branches generate an answer pool $\mathcal{A} = \{A_1, A_2, ..., A_p\}$.

While the tree structure improves solution coverage, it introduces two key challenges: 
(1) Candidate Selection: How to evaluate and prioritize promising children nodes in each exploration step? 
(2) Answer Determination: How to select the optimal answer from the pool $\mathcal{A}$?

\section{ThoughtProbe: Classifier-guided Reasoning Tree Exploration}
This section presents our ThoughtProbe framework that guide the response space exploration by probing LLMs internal thoughtfulness.
We first validate this linear separability through comprehensive probing experiments across different LLMs.
Building on these findings, we then introduce a classifier-guided beam search algorithm that systematically explores the response space to construct a diverse answer pool. 
Finally, we propose marginalization methods to aggregate these answers based on their thoughtfulness evaluation, enabling effective optimal answer selection.

\subsection{Linear Probing in Thoughtfulness}
\label{main:Linear Probing in Thoughtfulness}
In this section, we conduct linear probing experiments to analyze how thoughtfulness is encoded across different architectural components (hidden states, attention outputs, and MLP activations) of LLMs. 
Through examining the classification performance on thought/non-thought responses, we reveal distinct patterns in how different LLMs represent thoughtful reasoning.

\textbf{Training Data Construction}
To create a dataset for thoughtfulness classification, we collect paired thought/non-thought responses using questions from the GSM8K \cite{cobbe2021training} training set. 
For each question, we sample 10 distinct responses, each limited to 240 tokens.
We establish a binary labeling criterion: responses are labeled as thoughtful if they provide correct answers and contain more than 30 tokens, while responses that are both incorrect and shorter than 30 tokens are labeled as non-thoughtful (intuitive). 
This labeling scheme propagates to all response tokens and their corresponding hidden activations within each response, creating our training dataset for thoughtfulness classification at hidden representation level. 
More details are in the appendix  \ref{app: classifier training data}.

\textbf{Linear Classifier}
We employ Logistic Regression (LR) as our linear classifier. 
LR models the probability of thoughtfulness through a two-step process: first computing the logit (log-odds) using a linear function $\mathbf{w}^\top \mathbf{x} + b$, then transforming it to probability of positive via the sigmoid function $\sigma$.
\vspace{-2mm}
\begin{align*}
\text{logit} = \ln\frac{P(y=1|\mathbf{x})}{P(y=0|\mathbf{x})} &= \mathbf{w}^\top \mathbf{x} + b \\
P(y=1 | \mathbf{x}) = \sigma(\text{logit}) &= \frac{1}{1 + e^{-(\mathbf{w}^\top \mathbf{x} + b)}}
\end{align*}

where $\mathbf{w}$ is the weight vector, $b$ is the bias term, and $\mathbf{x}$ is the input feature vector. 
LR fits this linear decision boundary by minimizing the binary cross-entropy loss.

\textbf{Setup}
Using our collected 1,000 thought/non-thought pairs (split 80\% for training and 20\% for testing), we investigate the linear separability of thoughtfulness in model activations from three LLMs: Mistral-7b \cite{jiang2023mistral}, Gemma-2-2b \cite{team2024gemma}, phi-1.5 \cite{li2023textbooks}. 
For each layer and representation type (Hidden states, Attention activations, and MLP activations), we train LR classifiers and evaluate their performance using AUC-ROC, and F1-score.

\vspace{-3mm}
\begin{figure}[ht]
\begin{center}
\centerline{\includegraphics[scale = 0.40]{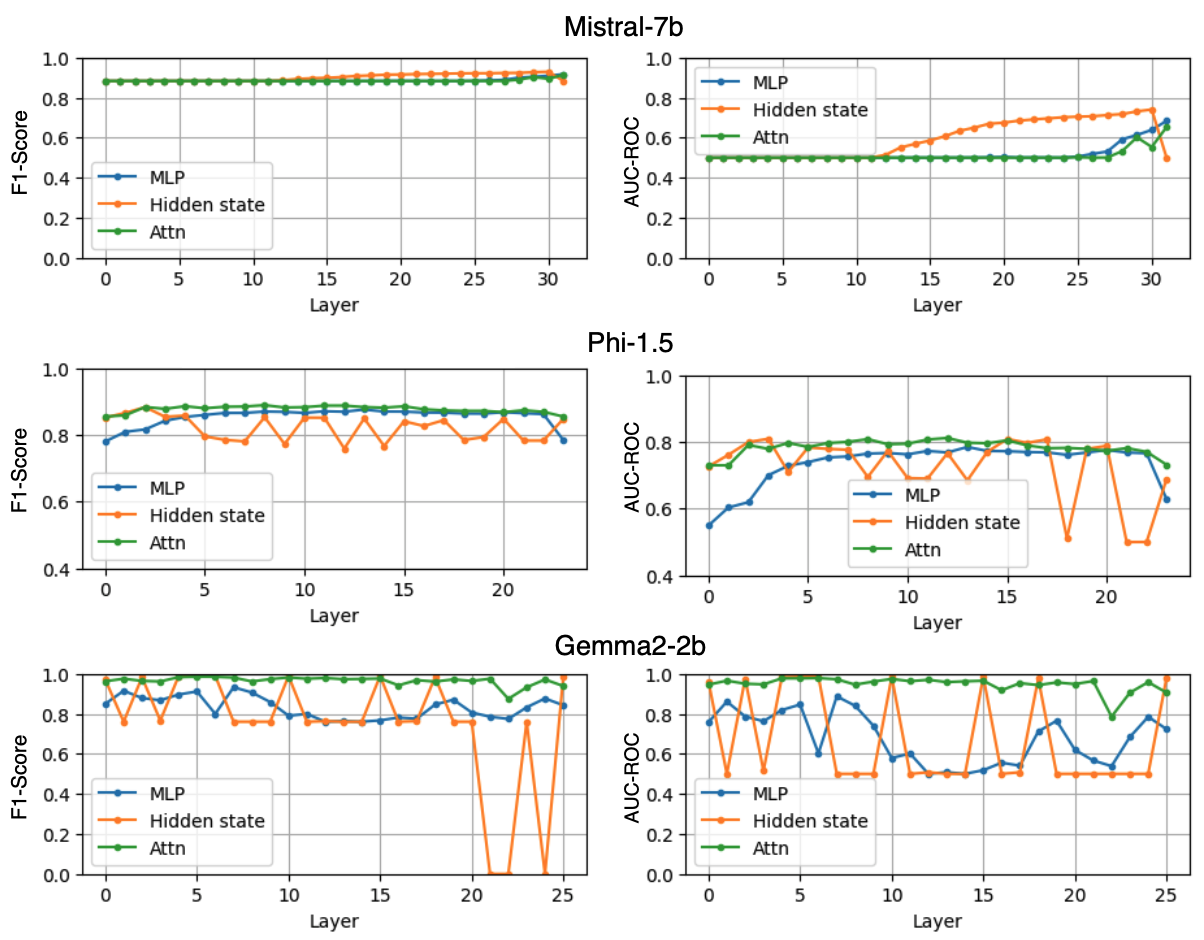}}
\caption{Classification performance (F1-Score and AUC-ROC) of linear classifiers across different representation types and LLMs.}
\label{classifier}
\end{center}
\vskip -0.2in
\end{figure}

\textbf{Results} Figure \ref{classifier} shows the plot of classification performance. Our analysis reveals two key findings:
(1) Representation type analysis: 
Different LLMs demonstrate distinct patterns in expressing thoughtfulness across representation types. 
In Mistral-7b, while MLP and Attention activations perform similarly, hidden states achieve the best performance.
In Phi-1.5, hidden states fluctuate considerably, but Attention outputs remain stable and outperform MLP.
In Gemma2-2b, hidden states and MLP activations show significant fluctuation, while Attention outputs maintain superior performance.
These diverse patterns suggest that LLMs may adopt different architectural strategies for encoding thoughtful reasoning capabilities.
(2) Layer-wise analysis: Layer depth affects classification performance differently across LLMs.
Mistral-7b exhibits an increasing trend from shallow to deep layers, indicating a progressive refinement of thoughtfulness features.
In contrast, Phi-1.5 and Gemma2-2b show fluctuation patterns across layers with no clear upward or downward trends, suggesting a more distributed encoding of thoughtfulness concept.

\textbf{Conclusion} Despite variations across representations and layers, all models achieve over 80\% performance with their optimal configurations, indicating that thoughtfulness are linearly separable in these neural representations.


\subsection{Classifier-guided Beam Search}
We propose a classifier-guided beam search where classifier's logit serve as reward signals \cite{sun2024rethinking} for ranking and selecting thoughtful candidates. 
The theoretical foundation is established by the following theorem:

\begin{theorem}[Logit-Reward Order Preservation]
Let $l(x)$ be the logit value of a binary classifier trained on preference data derived from Bradley-Terry model \cite{bradley1952rank}, where preference pairs are treated as binary classification data. 
Let $r(x)$ be the reward function in the original Bradley-Terry model. The following preference ordering equivalence holds:
\begin{equation*}
r(x_1) > r(x_2) \iff l(x_1) > l(x_2)
\end{equation*}
(See appendix for proof)
\end{theorem}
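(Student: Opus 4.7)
The plan is to reduce the theorem to the statement that the classifier's predicted log-odds $l(x)$ is a strictly monotone function of the Bradley--Terry reward $r(x)$. Since logit is one-to-one, it suffices to show that the marginal probability $P(y=1 \mid x)$ induced by the preference-as-classification reduction is strictly increasing in $r(x)$.

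First I would fix notation and make the reduction explicit. Under Bradley--Terry, for any pair $(x_i, x_j)$ the preference probability is $P(x_i \succ x_j) = \sigma(r(x_i) - r(x_j))$. When we treat preference pairs as classification data, each observed pair contributes one positive example (the winner) and one negative example (the loser). Assuming a symmetric sampling distribution $p(x')$ for the partner item (so that positives and negatives see the same marginal over the other element), the induced label distribution at a fixed $x$ is
\begin{equation*}
P(y=1 \mid x) \;=\; \mathbb{E}_{x' \sim p}\bigl[\sigma(r(x) - r(x'))\bigr].
\end{equation*}
The classifier logit is then $l(x) = \sigma^{-1}\bigl(P(y=1 \mid x)\bigr)$ by definition of log-odds.

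Second, I would establish strict monotonicity of $P(y=1 \mid x)$ in $r(x)$. Because $\sigma$ is strictly increasing and differentiable with positive derivative, differentiating under the expectation gives
\begin{equation*}
\frac{\partial\, P(y=1 \mid x)}{\partial\, r(x)} \;=\; \mathbb{E}_{x' \sim p}\bigl[\sigma(r(x)-r(x'))\bigl(1-\sigma(r(x)-r(x'))\bigr)\bigr] \;>\; 0.
\end{equation*}
Hence $r(x_1) > r(x_2)$ implies $P(y=1\mid x_1) > P(y=1\mid x_2)$, and applying the strictly increasing $\sigma^{-1}$ yields $l(x_1) > l(x_2)$. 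The reverse direction follows by the contrapositive applied to the same chain of equivalences (or by noting all maps are strictly monotone bijections onto their ranges).

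The main obstacle I expect is justifying the symmetry/exchangeability assumption implicit in the statement, since without it $P(y=1\mid x)$ could pick up a data-dependent offset that is not monotone in $r(x)$ alone. I would handle this by stating explicitly that pairs are sampled exchangeably (winner and loser roles symmetric), which is the standard setting in which the Bradley--Terry likelihood and its binary-classification surrogate coincide; under this mild assumption the monotonicity argument above is tight and the order-preservation equivalence is immediate. A secondary technical point is to remark that $r$ is only identified up to an additive constant in Bradley--Terry, which is harmless here because order preservation is shift-invariant.
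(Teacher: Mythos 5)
Your proof takes a genuinely different --- and in fact cleaner --- route than the paper's. Both start from the same reduction: the classification probability is an expectation over Bradley--Terry comparisons, $P(y=1\mid x)=\mathbb{E}_{x'\sim p}\bigl[\sigma(r(x)-r(x'))\bigr]$, which is the paper's Lemma A.1. From there the paper applies Jensen's inequality to the convex map $t\mapsto a/(a+t)$ and derives only the one-sided bound $l(x)\geq r(x)-C$ with $C=\log\mathbb{E}[\exp(r(j))]$; it then states the order-preservation biconditional as a corollary without further argument. That step does not actually follow: a uniform lower bound $l(x)\geq r(x)-C$ is compatible with $l$ failing to be monotone in $r$ (e.g.\ $l(x_2)$ could exceed $l(x_1)$ while both sit above their respective bounds), so the paper's chain of reasoning has a gap. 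You avoid this entirely by proving strict monotonicity directly: for each fixed partner $x'$, $\sigma(r(x)-r(x'))$ is strictly increasing in $r(x)$, hence so is the expectation $P(y=1\mid x)$, and composing with the strictly increasing $\sigma^{-1}$ gives that $l$ is a strictly increasing function of $r$, which is exactly the biconditional. Your explicit flagging of the exchangeability assumption on the partner distribution is also more careful than the paper's informal treatment. One small stylistic note: the differentiation-under-the-expectation step is unnecessary --- pointwise strict monotonicity of $\sigma$ already yields the strict inequality after taking expectations --- but it is correct as written.
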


This theorem establishes that the classifier's logit outputs can serve as a theoretically sound proxy for thoughtfulness evaluation and candidate ranking in our beam search process. 
We also empirically validate its ranking ability in Figure \ref{fig:score}. 
In the left subplot, thoughtful correct responses consistently achieve higher logit values than short intuitive incorrect ones. 
Remarkably, the right subplot shows that thoughtful correct responses also maintain higher logit values compared to lengthy incorrect responses, suggesting that our classifier captures response quality independent of length.
Detailed analyses are presented in the appendix \ref{app:score analysis}.

\begin{figure}[h]
\begin{center}
\centerline{\includegraphics[scale = 0.24]{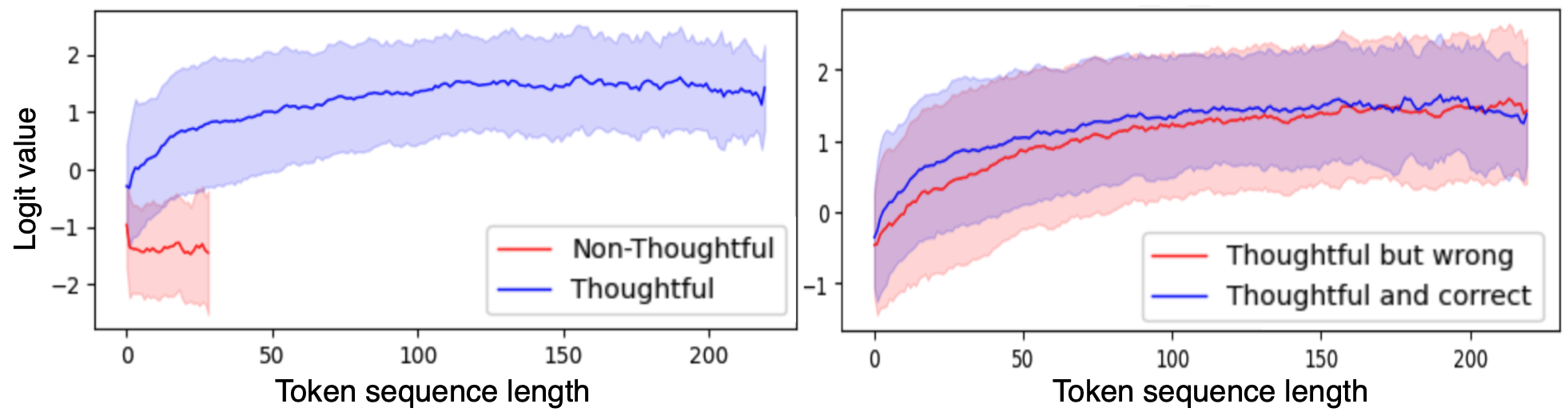}}
\caption{Mean logit values and variance regions along the token sequence in Phi-1.5. Left: Comparison between lengthy thoughtful correct responses and concise incorrect intuitive responses. Right: Comparison between lengthy thoughtful correct responses and lengthy incorrect responses.}
\label{fig:score}
\end{center}
\vspace{-3mm}
\end{figure}

\begin{figure*}[ht]
\begin{center}
\centerline{\includegraphics[scale = 0.35]{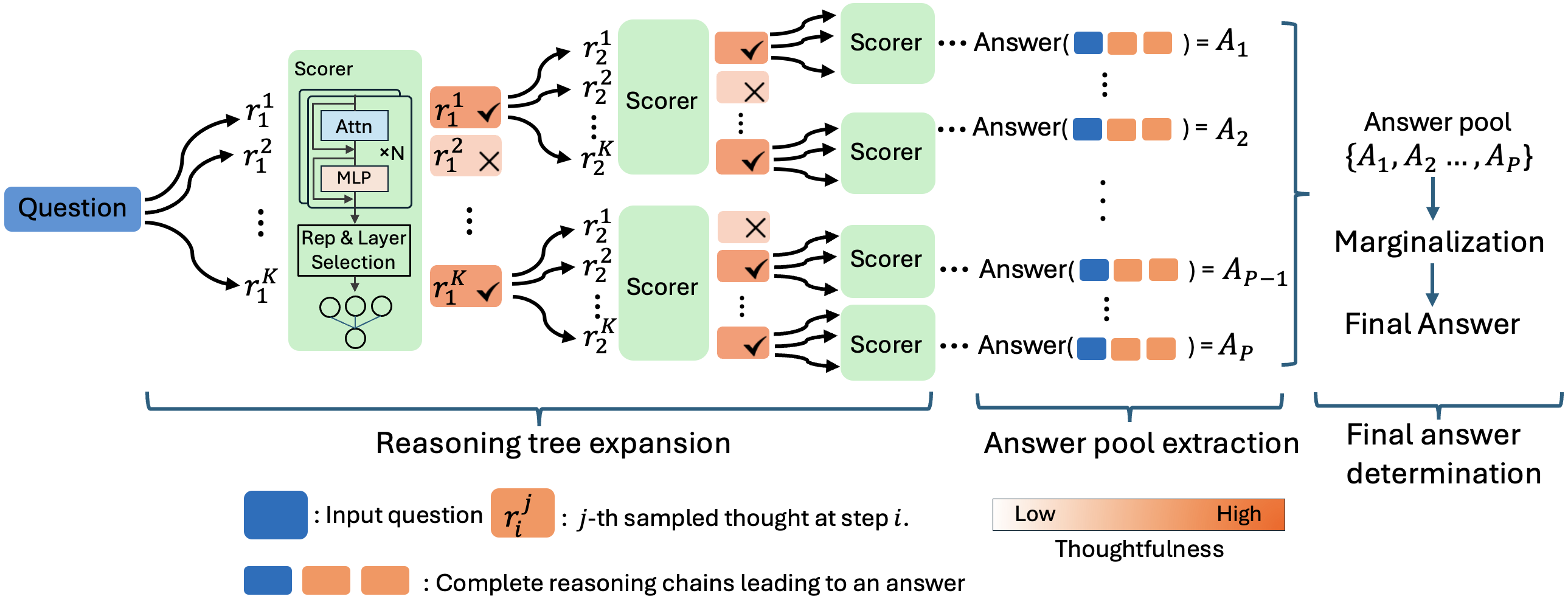}}
\caption{Our classifier-guided tree exploration framework. 
At each parent node, multiple candidates are sampled and evaluated by a pre-trained classifier in activation space. 
Nodes are selected for further expansion based on thoughtfulness scores. Each exploration branch produces a candidate answer, forming an answer pool from which the final answer is determined through marginalization across all branches.}
\vspace{-3mm}
\label{framework}
\end{center}

\end{figure*}

Specifically, for a parent node (root question or intermediate reasoning step), the tree expansion process is formulated as follows:

\textbf{Diverse Beam Construction}: 
The process begins by generating diverse candidate continuations, organized into a beam. 
To encourage diversity, stochasticity must be introduced during token sequence generation.
In this paper, we employ Top-K-Start Greedy Decoding, which explores alternative top-$k$ tokens at the first decoding step, followed by greedy decoding for subsequent steps \cite{wang2024chain}.
The resulting $k$ reasoning chains, denoted as $B = \{R_1, R_2, ..., R_k\}$, represent potential continuations with associated hidden states, forming the initial beam for further processing.

\textbf{Thoughtfulness Evaluation via Classifier}: 
Once the beam is constructed, a pre-trained classifier is used to evaluate the thoughtfulness of each candidate . 
The classifier operates on the hidden state representations of the chains and assigns a score to each one. 
Specifically, for a candidate chain $R_i$, the thoughtfulness score $S_i$ is computed as $S_i = l(Rep(R_i[-1]))$, where $l(\cdot)$ is the logit output of classifier and $Rep(R_i[-1])$ represents the hidden state of the last token in $R_i$. 

\textbf{Beam Pruning by Score Ranking}: 
After scoring, all candidate chains are ranked based on their thoughtfulness scores, and only the top-$n$ highest-scoring candidates are retained for further expansion.
The pruned beam, denoted as $B'$, is defined as $B' = {R_{\sigma(i)} \mid i \leq n}$, where $\sigma$ is the permutation that sorts the scores in descending order, and $R_{\sigma(i)}$ represents the candidate corresponding to the $i$-th highest score. 
By dynamically adjusting the beam width $n$, we can control the trade-off between exploration breadth and computational efficiency. 
This pruning step ensures that only promising reasoning paths are preserved, effectively reducing computational overhead while maintaining the quality of the reasoning process.

\textbf{Implementation Details}: 
Our framework consists of two phases: a branching phase for systematic exploration with depth $m$ and beam width $n$, followed by a completion phase for final generation.
During the branching phase, we iteratively expand the tree for $m$ steps. 
At each step $i$, we first generate $k$ candidate responses for each node and select the top-$n$ candidates based on their thoughtfulness scores, with each candidate expanded by generating a sequence of $T_i$ tokens.
In the completion phase, all leaf nodes from the branching phase are extended using greedy decoding until either reaching a completion token or the maximum length limit.
For input formatting, we adopt a simple question-answer template: ``Question:[question]\verb|\n|Answer:" without employing any additional prompting techniques.

\subsection{Answer Pool Marginalization}
After completing the tree expansion process, we generate final answers by appending the prompt ``Therefore, the answer is'' to each branch, resulting in an answer pool $\mathcal{A} = \{A_1, A_2, ..., A_p\}$. 
To select the final answer from the pool, several straightforward approaches can be applied: (1) majority voting based on answer frequency, and (2) single-branch selection that selects the answer from individual branch with the highest score metrics (\textit{e.g.}, final score or mean score).
Instead, we propose branch-aggregation selection that determines the final answer by aggregating branch score metrics for each answer.

Specifically, for each candidate answer $A_i$, we collect its supporting branches $R(A_i)$, which consists of all branches that arrive at $A_i$ as their final answer, formally defined as $R(A_i) = \{R \mid \text{answer}(R) = A_i\}$.
Then we compute the value of each branch from its node score sequence $[S_1, S_2, ..., S_N]$, using its final score $S_N$ as the branch value.
For each unique answer, we then aggregate the values of all its supporting branches by summation: $Value(A_i) = \sum_{R \in R(A_i)} Value(R)$.
Finally, we select the answer with the highest aggregated value as our final answer: $A^* = argmax_{A_i \in \mathcal{A}} Value(A_i)$.
We provide a detailed comparative analysis of different answer selection methods in Section \ref{sec: Answer Value Calculation Analysis}.

\section{Experiments}

\textbf{Dataset and LLMs} We evaluate our method on popular mathematical benchmarks: 
(1) the GSM8K benchmark of math word problems \cite{cobbe2021training}; 
(2) MultiArith \cite{roy2016solving}; 
(3) the SVAMP dataset of math word problems with varying structures  \cite{patel2021nlp};
(4) the MAWPS benchmark \cite{koncel2016mawps}.
We use same LLMs in Section \ref{main:Linear Probing in Thoughtfulness}, which are Mistral-7b, Gemma2-2b and phi-1.5. 

\textbf{Baselines}
We compare our approach with four representative baselines:
(1) Greedy Decoding: A straightforward approach that selects the highest-probability token at each step of generation, simulating fast, intuitive responses.
(2) Zero-shot Chain-of-Thought (Zs CoT) \cite{kojima2022large}: Enhances reasoning by appending the input "Let's think step by step" to questions, encouraging step-wise problem-solving without task-specific training.
(3) Chain-of-Thought Decoding (CoT-Dec) \cite{wang2024chain}: Generates multiple solution paths and selects the most confident one based on the average probability margin between the top two token predictions in the answer segment.
(4) Self-consistency (SC) \cite{wang2022self}: Employs a majority voting mechanism across multiple generated responses to identify the most consistent answer.

\textbf{Hyperparameters Config}
For the branching phase, we set the depth $m$ = 3 and beam width $n$ = 3. At each step, we generate $k$ = 10 candidates and select top-$n$ based on thoughtfulness scores, with token generation lengths $T_i$ = [1, 20, 20] for steps $i$ = 1,2,3. For the completion phase, we extend each leaf node with two steps of greedy decoding, generating 100 tokens per step.
A detailed analysis of how depth $m$ and beam width $n$ affect the framework's performance is presented in Section \ref{sec: Search Space Analysis}.

\begin{table}[h]
\resizebox{0.5\textwidth}{!}{
\begin{tabular}{cccccc}
\hline
Models                      & Methods & GSM8K       & MultiArith       & SVAMP             & MAWPS \\ \hline
\multirow{5}{*}{Mistral-7b} & Greedy  & 11.92       & 15.16            & 52.66             & 58.29 \\
                            & SC      & 17.13       & 27.22            & 58.00             & 66.56 \\
                            & Zs CoT  & 26.17       & 50.47            & 56.33             & 69.81 \\
                            & CoT-Dec & 25.79       & 39.76            & 58.66             & 64.78 \\
                            & Ours    &\textbf{40.18}  &\textbf{58.57} &\textbf{61.33} &\textbf{80.64} \\ \hline
                            
\multirow{5}{*}{Gemma2-2b}  & Greedy  & 6.42           & 5.53           & 38.53           & 46.16 \\
                            & SC      & 7.59           & 8.41           & 40.00        & 47.00 \\
                            & Zs CoT  & 16.92          & 42.11          & 39.33           & 51.69 \\
                            & CoT-Dec & 14.34          & 33.22          & 38.99           & 50.28 \\
                            & Ours    & \textbf{20.62} & \textbf{50.00} & \textbf{48.66}          & \textbf{63.86} \\ \hline

\multirow{5}{*}{Phi-1.5}    & Greedy  & 5.69             & 24.44          & 24.33            & 33.74 \\
                            & SC      & 25.02            & 33.88          & 29.03            & 39.16 \\
                            & Zs CoT  & 7.21             & \textbf{83.88} & 39.33            & 65.18 \\
                            & CoT-Dec & 23.12            & 25.00          & 23.66            & 50.05 \\
                            & Ours    & \textbf{37.38}   & 80.56          & \textbf{45.66}   &\textbf{68.45} \\ \hline
\end{tabular}
}
\vskip -0.1in
\caption{Problem solving accuracy compared with baselines across LLMs and datasets}
\label{tab: main results}
\end{table}

\subsection{Main Experimental Analysis}
As shown in Table \ref{tab: main results}, our method consistently outperforms baseline approaches in most scenarios, achieving substantial improvements in problem solving accuracy. 

\textbf{Cross-Model Analysis}
Our method shows robust performance gains across different LLM scales. 
For the larger Mistral-7b model, we observe the most significant improvements, with our method achieving 40.18\% accuracy on GSM8K, surpassing the strongest baseline (Zs CoT) by 14.01\%.
The performance advantage maintains for smaller models like Gemma2-2b and Phi-1.5, where our method improves GSM8K accuracy by 3.7\% and 12.36\% respectively compared to their best baselines. 
This demonstrates our method's effectiveness and generalizability across different model scales.

\textbf{Cross-Dataset Analysis}
Our method shows varying effectiveness across different datasets.
On GSM8K's complex multi-step problems, we demonstrate consistent superiority across all models.
For MultiArith, while achieving strong performance with Mistral-7b (58.57\%) and Gemma-2-2b (50.00\%), Phi-1.5 shows slightly lower accuracy (80.56\%) compared to Zs CoT (83.88\%), suggesting simpler arithmetic problems might benefit less from our approach.
On SVAMP and MAWPS, we maintain consistent improvements, with notable gains on MAWPS (3.27\%-12.17\% over the best baseline).
Notably, we train our classifier only on GSM8K training set and use this single classifier across all datasets, demonstrating strong generalization to various mathematical reasoning datasets.

\begin{table}[]
\label{answer selection}
\resizebox{0.5\textwidth}{!}{
\begin{tabular}{cccccc}

\hline
LLMs & Methods & GSM8K & MultiArith & SVAMP &WAMPS\\ \hline
\multirow{5}{*}{\rotatebox{90}{Mistral-7b}} 
& Cover Rate  & 85.44          & 91.65          & 90.33          & 94.33 \\
& F Agg/Sgl    & 40.18/27.84     & \textbf{58.57}/32.78  & \textbf{61.33}/52.45  & \textbf{80.64}/63.18  \\
& M Agg/Sgl     & 38.21/24.92        & 55.15/33.42        & 58.44/51.52       & 77.33/61.42\\
& IR Agg/Sgl   & \textbf{42.92}/ 23.52 & 57.53/35.63       & 60.21/47.42         & 79.33/64.21\\
& Vote & 39.21 & 56.15 & 59.44 & 78.33 \\ \hline
\multirow{5}{*}{\rotatebox{90}{Gemma2-2b}}
& Cover Rate   & 79.65                & 84.33                 & 88.44                    & 90.74 \\
& F Agg/Sgl    & \textbf{20.62}/11.52 & 50.00/25.53         & \textbf{48.66}/16.82   & \textbf{63.86}/35.42 \\
& M Agg/Sgl    & 18.15/10.83         & 47.77/27.63         & 45.33/23.63             & 61.33/43.85\\
& IR Agg/Sgl   & 21.53/13.53        & \textbf{51.21}/34.42 & 47.44/19.42            & 62.33/40.91\\
& Vote          & 19.21               & 48.15                & 46.44                     & 62.33            \\ \hline

\multirow{5}{*}{\rotatebox{90}{Phi-1.5}}
& Cover Rate      & 84.33                 & 89.42                     & 88.63                 & 92.82 \\
& F Agg/Sgl       & 37.38/21.72         & \textbf{80.56}/56.86    &\textbf{45.66}/29.74    & 68.45/49.72 \\
& M Agg/Sgl       & 35.77/20.44          & 77.21/49.63            & 42.33/31.42         & 65.53/50.82\\
& IR Agg/Sgl      & \textbf{38.21}/21.93 & 79.53/48.82                     & 44.65/30.84         & \textbf{69.84}/51.72\\
& Vote            & 36.21               & 78.15                       &  43.44                & 66.49 \\ \hline
\end{tabular}
}
\vskip -0.1in
\caption{Performance comparison of different answer selection methods. F Agg/Sgl, M Agg/Sgl, and IR Agg/Sgl represent the accuracy of branch-aggregation/single-branch selection using final scores, mean scores, and increase ratio respectively. Vote shows the accuracy of majority voting baseline.}
\vspace{-5mm}
\label{tab: answer value}
\end{table}

\begin{table*}[h]
\centering
\resizebox{0.90\textwidth}{!}{
\begin{tabular}{cccccccc}
\hline
\multirow{2}{*}{LLMs}   & \multirow{2}{*}{Dataset}  & \multicolumn{3}{c}{LR} 
                        & \multicolumn{3}{c}{SVM} \\ \cline{3-8}   
                        &   & MLP  & Attn       & \multicolumn{1}{c|}{Hidden states}   & MLP & Attn  & Hidden states \\ \hline
\multirow{4}{*}{Mistral-7b}    & GSM8K         & 35.21/18.42  & 34.57/17.23           & \textbf{40.18}/13.75  & 36.43/17.42 & 33.37/6.23 & 38.32/12.39\\
                               & MultiArith    & 51.55/16.33 & 49.82/21.65          & \textbf{58.57}/23.39  & 49.55/18.33 & 50.59/18.65 & 57.45/20.11   \\
                               & SVAMP         & 35.43/28.72 & 34.65/23.23          & \textbf{61.33}/25.76   & 36.43/27.42 & 35.43/26.42 & 60.39/17.23    \\
                               & WAMPS         & 69.55/39.33 & 72.81/42.65          & \textbf{80.64}/52.46  & 68.55/38.92 & 77.52/7.33 & 79.38/18.65    \\ \hline
\multirow{4}{*}{Gemma2-2b}     & GSM8K         & 15.41/6.51  & \textbf{20.62}/15.86 & 17.41/12.39     & 18.41/5.62 & 18.41/4.81 & 19.91/9.71   \\
                               & MultiArith    & 42.41/17.74  & \textbf{50.00}/21.23 & 40.92/29.72    & 46.41/6.23 & 48.41/10.63 & 41.65/15.85   \\
                               & SVAMP         & 35.43/13.92 & \textbf{48.66}/27.23  & 45.33/30.76   & 36.43/16.42 & 47.43/5.42 & 44.81/13.84    \\
                               & WAMPS         & 48.55/28.74 & 63.86/29.65  & 47.64/23.59            & 49.55/27.33 & \textbf{64.55}/26.37 & 46.84/16.65    \\ \hline
\multirow{4}{*}{Phi-1.5}       & GSM8K         & 15.41/8.69  & \textbf{37.38}/11.82 & 14.14/9.28    & 16.41/4.84 & 36.41/23.48 & 13.83/7.93    \\
\multicolumn{1}{l}{}           & MultiArith    & 46.41/24.82 & \textbf{80.56}/21.23 & 45.41/24.61    & 47.41/15.71 & 75.41/34.72 & 44.28/19.47   \\
\multicolumn{1}{l}{}           & SVAMP         & 34.43/16.71 & 45.66/27.75  & 43.39/25.76            & 35.43/21.42 & \textbf{46.43}/24.24 & 43.14/24.85    \\
\multicolumn{1}{l}{}           & WAMPS         & 47.55/27.48 & \textbf{68.45}/49.65  & 46.64/32.53   & 48.55/26.33 & 67.55/45.62 & 45.72/29.38     \\ \hline
\end{tabular}
}
\vskip -0.1in
\caption{Performance comparison of different classifiers (LR and SVM) and representations (MLP, Attention, Hidden states) using accuracy scores on top-3 and bottom-3 layers (reported as top-3/bottom-3)}
\label{tab: classifier feature analysis}
\end{table*}

\subsection{Answer Marginalization Analysis}
\label{sec: Answer Value Calculation Analysis}

Table \ref{tab: answer value} presents a comprehensive comparison of different approaches for final answer selection from the answer pool.
We first examine the coverage rate - the percentage of correct answers present in the pool - which indicates an upper bound for selection accuracy.
The high coverage rates (79\%-94\%) demonstrate that our exploration strategy effectively traverses the response space and captures valid reasoning chains.

We also evaluate two main selection paradigms: single-branch selection and branch-aggregation selection, across three score sequence metrics: final scores, average scores, and increase ratio (defined as the proportion of score improvements between adjacent nodes).
Our analysis shows that branch-aggregation selection consistently outperforms single-branch selection across all metrics. Among the three metrics, final scores yield the best performance, followed by increase ratio, while average scores show relatively inferior results.
Additionally, we benchmark these methods against the baseline majority voting approach, which shows superior performance to single-branch selection but falls short of branch-aggregation selection.


\subsection{Classifier Feature Analysis}
Table \ref{tab: classifier feature analysis} shows the performance comparison of different classifiers features, including classifier type, representation type and layers range.

\textbf{Classifier type Study}
We comparing Support Vector Machine(SVM) and LR classifiers, we observe their comparable performance across different representations, layers, and LLMs.
While SVM shows slightly better results in some cases, the differences are marginal, suggesting both classifiers can effectively guide the search process.

\textbf{Representation Layer Analysis}
We analyze the impact of layer by comparing top-3 and bottom-3 layers based on their classification F1-scores.
The results show that across all LLMs, using top-performing layers consistently outperforms bottom layers.
For GSM8K, the average improvements are 31.36\%, 29.79\%, and 30.04\% on Mistral-7b, Gemma-2-2b, and Phi-1.5 respectively, demonstrating that layer selection significantly affects search effectiveness.

\textbf{Representation Type Study}
Hidden states yield the best search performance for Mistral-7b, while attention activations prove more effective for both Gemma-2-2b and Phi-1.5. 
This pattern mirrors the relative strengths we observed in classification performance, suggesting a consistent relationship between classifier logit and reward.


\subsection{Search Space Size Analysis}

\label{sec: Search Space Analysis}
We investigate how different search space configurations affect model performance by varying beam width $n$ and tree depth $m$.
For each configuration, we maintain the initial sampling size $k=10$ while adjusting width $n \in \{1,2,3,4,5,6\}$ and depth $m \in \{1, 2, 3, 4, 5, 6\}$. 
All generated chains are constrained to a maximum length of 240 tokens, with tokens evenly distributed across depth steps ($T_i = 240/m$ tokens per step).
Figure \ref{fig: Search Space} demonstrates how performance varies with different combinations of width and depth, using Phi-1.5 on GSM8K.

\textbf{Beam Width Impact}
The accuracy improves substantially as the beam width increases, demonstrating the benefits of maintaining more parallel branches at each expansion step. 
The improvement trend begins to plateau around width = 4, suggesting that maintaining 3-4 parallel reasoning trajectories provides sufficient exploration while remaining computationally efficient. 
Further increasing the beam width yields diminishing returns, possibly due to the introduction of less more noise than thoughtful content.

\textbf{Search Depth Study}
The accuracy improves as search depth increases and reaches its peak at depth 3 or 4.
Beyond this optimal depth, performance gradually declines, suggesting that deeper searches may accumulate errors and explore irrelevant reasoning paths. 
This optimal depth aligns with human solving patterns, as most mathematical problems can be effectively solved within 3-4 key reasoning steps.


\begin{figure}[ht]
\begin{center}
\centerline{\includegraphics[scale = 0.34]{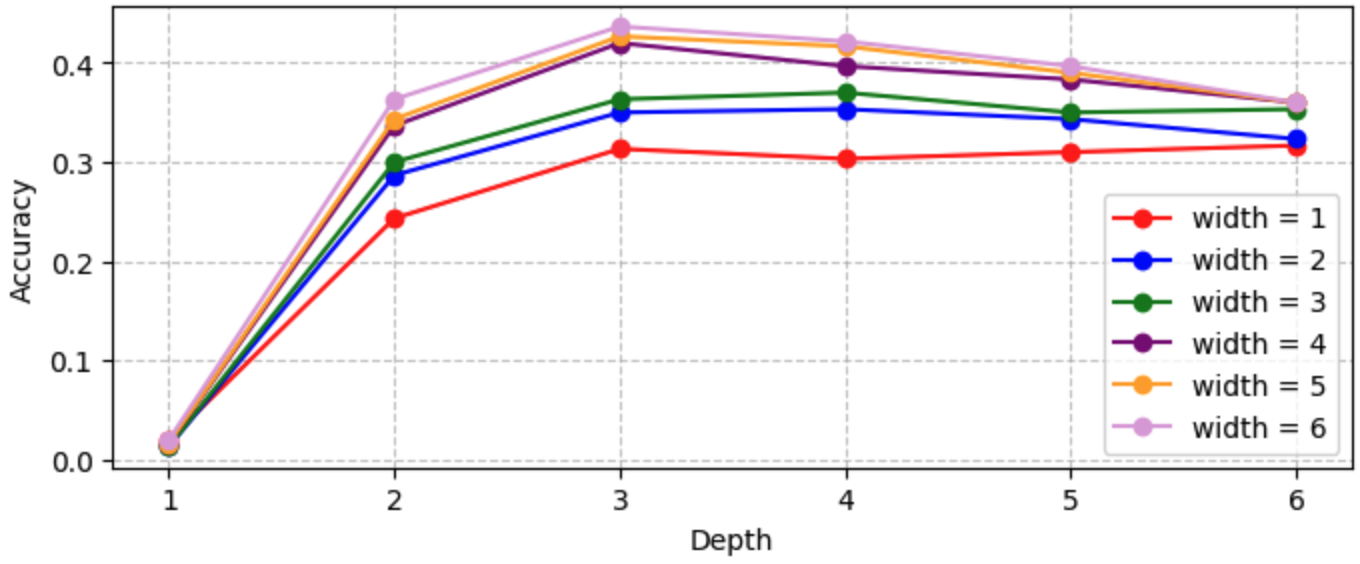}}
\vskip -0.1in
\caption{The accuracy plot with different choice of expansion depth and beam width.}
\label{fig: Search Space}
\end{center}
\vskip -0.2in
\end{figure}

\section{Related Work}

\subsection{Reasoning Ability Enhancement in LLMs}
Methods to improve LLMs' reasoning ability can be categorized into tuning-based and inference-time approaches.
Tuning-based methods focus on fine-tuning LLMs with high-quality rationales. STaR \cite{zelikman2022star} iteratively bootstraps rationales through generation and filtering. TRICE \cite{hoffman2024training} employs MCMC sampling to construct training data with rationales and leverages rationalization for failed cases. CPO \cite{zhang2024chain} extracts step-wise preference data from ToT's search tree for DPO-based optimization.
Inference-based methods design structured reasoning frameworks to guide LLMs during inference. Chain-of-thought (CoT) \cite{wei2022chain, kojima2022large} breaks down reasoning into sequential steps. Tree-of-thoughts (ToT) \cite{yao2024tree, long2023large} enables multi-path exploration with backtracking. Graph-of-Thoughts (GoT) \cite{besta2024graph} extends to arbitrary graph topologies for complex reasoning patterns.
Tree-based methods have emerged as mainstream by balancing exploration capability with structural simplicity.

\subsection{Linear Representation Hypothesis in LLMs}
The Linear Representation Hypothesis (LRH), initially proposed in word embeddings \cite{mikolov2013linguistic}, suggests that semantic features exist as linear directions in activation space. Recent work has extended this to LLMs \cite{luo2024pace, von2024language, zou2310representation, park2311linear}, showing that high-level concepts like truthfulness \cite{li2024inference, burns2022discovering}, morality \cite{zou2310representation}, and factual knowledge \cite{gurnee2023language} can be represented linearly in model's activation space.
This finding enables two key applications: detection and guidance. For detection, linear classifiers can effectively probe specific concepts \cite{chen2024inside, du2024haloscope}, with their high performance indicating the linear encoding of these concepts. For guidance, these identified directions can be leveraged to steer model behavior during inference \cite{lee2409programming, li2024inference, zhao2024steering}.




\section{Discussion and Limitation}
Despite our approach's effectiveness, we acknowledge several limitations that may warrant further investigation. 
First, our current implementation relies on fixed token lengths to segment intermediate thoughts during tree node expansion, which may disrupt natural reasoning by forcing arbitrary branching points. 
Future work should explore more flexible, semantic-aware splitting criteria to better preserve complete units of reasoning.

Second, while the answer pool achieves promising coverage rates for correct answers, our final answer selection process has room for improvement. 
The observable gap between coverage and accuracy suggests current chain evaluation and branch-aggregation strategies may not optimally capture answer quality. 
Future research could develop more sophisticated scoring metrics and aggregation methods.

Furthermore, while our classifier is trained using outcome-based supervision, it guides exploration by assessing progress at each step. This misalignment between training objectives (final outcomes) and guidance criteria (intermediate progress) may limit its effectiveness. 
Future work could explore step-wise feedback or hierarchical evaluation to better align the classifier training for better progress-based guidance.

\section{Conclusion}

In this work, we present ThoughtProbe, a pure inference-time framework that leverages LLMs' intrinsic reasoning capabilities through classifier-guided exploration. 
We first demonstrate that thoughtfulness - a proxy concept of intrinsic reasoning - is linearly separable in LLMs' activation space. 
Through comprehensive probing experiments, we find different LLM architectures encode thoughtfulness distinctly across representation types and layers, achieving strong classification performance using simple linear classifiers. 
Also, the trained classifiers logit consistently assign higher scores to correct, thoughtful responses over incorrect ones, regardless of response length.
Building on this discovery, we develop a classifier-guided beam search algorithm that effectively explores the reasoning space by prioritizing thoughtful candidate token sequences. 
Our framework combines tree-structured exploration with branch aggregation for optimal answer selection, enabling systematic discovery and utilization of valid reasoning chains within LLMs' response space.
Extensive experiments across multiple mathematical reasoning benchmarks demonstrate the effectiveness of our approach, achieving significant improvements over existing methods. 
These findings advance both our understanding of LLMs' intrinsic reasoning mechanisms and enable their practical application without human intervention.


\section{Impact Statement}
This paper presents work whose goal is to advance the field of Machine Learning. There are many potential societal consequences  of our work, none which we feel must be specifically highlighted here.

\nocite{langley00}

\bibliography{example_paper}
\bibliographystyle{icml2025}

\newpage
\appendix
\onecolumn
\section{Appendix}



\subsection{Complete proof for Theorem 3.1}

\textbf{Binary Classification Setting:}
Let $\mathcal{X}$ be the input space. For any $x \in \mathcal{X}$:
\begin{itemize}
\item $P(y = 1|x)$ denotes the probability of positive class
\item $l(x) := \text{logit}$ is the classifier logit output
\end{itemize}

\textbf{Bradley-Terry Preference Model:}
For pairs $(i, j) \in \mathcal{X} \times \mathcal{X}$:
\begin{itemize}
\item $P(i \succ j)$ denotes preference probability
\item $P(i \succ j) = \frac{\exp(r(i))}{\exp(r(i)) + \exp(r(j))}$
\item $r(\cdot)$ is the underlying reward function
\end{itemize}

\begin{lemma}[Classification-Preference Connection]
For any $x \in \mathcal{X}$:
\begin{equation}
P(y = 1|x) = \mathbb{E}_{j\sim p(j)}[P(x \succ j)]
\end{equation}
\end{lemma}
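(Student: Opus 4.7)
The plan is to treat the lemma as a statement about the conditional distribution induced by the specific sampling procedure that converts Bradley--Terry preference pairs into per-item binary labels. Before any computation, I would make that procedure explicit: an input $x$ is drawn from a marginal $p(x)$, an independent ``opponent'' $j$ is drawn from $p(j)$, and the binary label is defined by which of the two wins under the Bradley--Terry model, so that by construction
\begin{equation*}
P(y = 1 \mid x, j) \;=\; P(x \succ j) \;=\; \frac{\exp(r(x))}{\exp(r(x)) + \exp(r(j))}.
\end{equation*}
Getting this modelling step on the page is what makes the rest of the argument mechanical.

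Given the setup, the proof itself is a one-line application of the law of total probability. I would marginalize out $j$:
\begin{equation*}
P(y = 1 \mid x) \;=\; \int P(y = 1 \mid x, j)\, p(j \mid x)\, dj.
\end{equation*}
Because the opponent is sampled independently of $x$, $p(j \mid x) = p(j)$, and substituting the definition of $P(y = 1 \mid x, j)$ from the previous display gives $\int P(x \succ j)\, p(j)\, dj$, which is by definition $\mathbb{E}_{j \sim p(j)}[P(x \succ j)]$. That is exactly the claim. For a discrete preference dataset the same reasoning goes through with a sum over observed opponents weighted by their empirical frequency.

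The hard part is conceptual rather than computational: justifying the two assumptions that $p(j \mid x) = p(j)$ and that $P(y = 1 \mid x, j) = P(x \succ j)$. If the training data were collected with opponent sampling that depends on $x$ (for example, adversarial or curriculum pairings), the independence step fails and the expectation would be taken with respect to a different measure $p(j \mid x)$; I would either assume exchangeable sampling up front or state the lemma in the more general form $P(y = 1 \mid x) = \mathbb{E}_{j \sim p(j \mid x)}[P(x \succ j)]$ and note that this still suffices for the eventual monotonicity argument of Theorem~3.1, since $P(x \succ j) = \sigma(r(x) - r(j))$ is strictly increasing in $r(x)$ pointwise in $j$, so any mixture over $j$ preserves the ordering that the logit $l(x)$ will inherit.
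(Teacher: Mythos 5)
Your proof is correct and takes the same conceptual route as the paper: both treat the binary label $y=1$ as the outcome of a competition between $x$ and a randomly drawn opponent $j$, with the classifier's positive-class probability being the marginal win probability. The paper simply asserts this identity as a modeling statement, whereas you actually carry out the marginalization via the law of total probability, making explicit the two assumptions it relies on (that $P(y=1 \mid x, j) = P(x \succ j)$ by construction of the label, and that $p(j \mid x) = p(j)$ by independent opponent sampling); your closing remark — that even without independence the pointwise monotonicity of $\sigma(r(x) - r(j))$ in $r(x)$ preserves the ordering needed for the downstream theorem — is a useful strengthening the paper does not mention.
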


\begin{proof}
In binary classification, we can view the process as a competition where:
\begin{itemize}
\item $x$ competes against a random competitor $j$
\item $P(y = 1|x)$ represents the winning probability of $x$
\item When $j$ is randomly sampled from $p(j)$, this probability equals $\mathbb{E}_{j\sim p(j)}[P(x \succ j)]$
\end{itemize}
\end{proof}

\begin{theorem}[Classification-Reward Equivalence]
Let $l(x)$ be the classifier logit output and $r(x)$ be the reward function. There exists a constant $C$ such that:
\begin{equation*}
l(x) \geq r(x) - C
\end{equation*}
\end{theorem}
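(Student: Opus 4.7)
The plan is to combine the Classification-Preference Connection (Lemma A.1) with the parametric form of the Bradley-Terry model, reducing the claim to a monotonicity argument on the sigmoid. First I would use the definition of the classifier's logit, $P(y=1\mid x) = \sigma(l(x))$, and substitute the Bradley-Terry expression $P(x \succ j) = \sigma(r(x) - r(j))$ into the right-hand side of the lemma. This gives the key identity
\begin{equation*}
\sigma(l(x)) \;=\; \mathbb{E}_{j \sim p(j)}\bigl[\sigma(r(x) - r(j))\bigr].
\end{equation*}

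Next I would invoke a boundedness assumption on the reward: there exists $M$ such that $r(j) \leq M$ for every $j$ in the support of $p(\cdot)$. Since $\sigma$ is strictly increasing, $\sigma(r(x) - r(j)) \geq \sigma(r(x) - M)$ pointwise in $j$, and taking expectation preserves the inequality, so $\sigma(l(x)) \geq \sigma(r(x) - M)$. Applying $\sigma^{-1}$, which is also monotonic, yields $l(x) \geq r(x) - M$, and the theorem follows with $C = M$.

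The main obstacle is the boundedness assumption, which is essential: without it, $p(j)$ could place mass on arbitrarily large rewards, driving the right-hand expectation to zero and making $l(x)$ arbitrarily negative regardless of $r(x)$. A quick algebraic route that makes the $x$-dependence explicit is to factor the identity as $\sigma(l(x)) = e^{r(x)}\,\mathbb{E}_j[1/(e^{r(x)} + e^{r(j)})]$ and $1 - \sigma(l(x)) = \mathbb{E}_j[e^{r(j)}/(e^{r(x)} + e^{r(j)})]$, then take the log-ratio to obtain $l(x) - r(x) = -\log \mathbb{E}_q[e^{r(j)}]$ under a reweighted distribution $q$ depending on $x$. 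This makes clear that the offset between $l$ and $r$ is not a fixed constant, so only a one-sided bound is attainable without further structure on $r$ and $p$.

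As a bonus, running the symmetric argument with a lower bound $m \leq r(j)$ gives $l(x) \leq r(x) - m$; combined with the previous inequality this pins the offset inside the interval $[-M, -m]$, which recovers the order-preservation statement of Theorem 3.1 whenever the reward difference $r(x_1) - r(x_2)$ exceeds $M - m$. I would include this extension as a remark immediately after the proof so that the appendix lemma visibly supports the main-text claim used to justify classifier-guided beam search.
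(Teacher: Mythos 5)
Your proof is correct under the boundedness assumption you introduce, but it takes a genuinely different route from the paper's. The paper also starts from the identity $\sigma(l(x)) = \mathbb{E}_{j}[\sigma(r(x)-r(j))]$, but then rewrites the integrand as $\exp(r(x))/(\exp(r(x))+\exp(r(j)))$ and applies Jensen's inequality to the map $t \mapsto a/(a+t)$ (convex for $t > -a$), yielding
\begin{equation*}
P(y=1\mid x) \;\geq\; \frac{\exp(r(x))}{\exp(r(x)) + \mathbb{E}_j[\exp(r(j))]},
\end{equation*}
and hence $l(x) \geq r(x) - \log \mathbb{E}_j[\exp(r(j))]$, i.e.\ $C = \log \mathbb{E}_j[\exp(r(j))]$. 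That constant is strictly smaller than your $C = M = \sup_j r(j)$ (since $\log\mathbb{E}[\exp(r(j))] \leq M$), and the assumption it requires is weaker: the paper needs only $\mathbb{E}_j[\exp(r(j))] < \infty$, not pointwise boundedness of $r$. So Jensen buys a tighter bound under a milder moment condition, while your monotone-domination argument is more elementary and, as you note, symmetrizes cleanly to give a matching upper bound $l(x) \leq r(x) - m$. That symmetrized observation is genuinely valuable here: the paper's one-sided inequality as stated does \emph{not} by itself yield the Order Preservation corollary it is invoked to support, and your two-sided version (giving order preservation when $r(x_1)-r(x_2) > M - m$) at least quantifies how much separation is needed. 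Your exact log-ratio computation $l(x) - r(x) = -\log\mathbb{E}_q[\exp(r(j))]$ with the $x$-dependent reweighting $q$ is also correct and makes explicit why the offset cannot be a single constant, which the paper glosses over.
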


\begin{proof}
Under the Bradley-Terry model:
\begin{equation}
P(y = 1|x) = \mathbb{E}_j\left[\frac{\exp(r(x))}{\exp(r(x)) + \exp(r(j))}\right]
\end{equation}

By Jensen's inequality, since $f(t) = \frac{a}{a + t}$ is convex in $t$ for $a > 0$:
\begin{equation}
P(y = 1|x) \geq \frac{\exp(r(x))}{\exp(r(x)) + \mathbb{E}[\exp(r(j))]}
\end{equation}

Taking logit transformation:
\begin{align*}
l(x) &= \text{logit} P(y = 1|x) = \log\frac{P(y = 1|x)}{1 - P(y = 1|x)} \geq r(x) - \underbrace{\log(\mathbb{E}[\exp(r(j))])}_{C}
\end{align*}
\end{proof}

\begin{corollary}[Order Preservation]
For any two instances $x_1$ and $x_2$:
\begin{equation}
r(x_1) > r(x_2) \iff l(x_1) > l(x_2)
\end{equation}
\end{corollary}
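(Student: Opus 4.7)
The plan is to prove the corollary by showing that the classifier logit $l(x)$ is a strictly monotone increasing function of the reward $r(x)$, from which both directions of the equivalence follow immediately. Note that the bound $l(x) \geq r(x) - C$ established in the preceding theorem is not enough on its own, since a one-sided inequality does not preserve the relative ordering of a pair of values; the argument will instead use the Classification-Preference Connection lemma in its exact equality form, which is where the Bradley-Terry structure really buys us something.

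First I would substitute the Bradley-Terry preference formula into the lemma to rewrite
\begin{equation*}
P(y=1 \mid x) = \mathbb{E}_{j \sim p(j)}\bigl[\sigma(r(x) - r(j))\bigr],
\end{equation*}
where $\sigma$ denotes the logistic sigmoid. Next I would argue that, since $\sigma$ is strictly increasing on $\mathbb{R}$ and the distribution of $j$ does not depend on $x$, the map $r(x) \mapsto P(y=1 \mid x)$ is strictly increasing: if $r(x_1) > r(x_2)$, then $\sigma(r(x_1) - r(j)) > \sigma(r(x_2) - r(j))$ pointwise in $j$, and monotonicity of expectation preserves the strict inequality. Finally, since the logit transformation $p \mapsto \log\frac{p}{1-p}$ is itself strictly increasing on $(0,1)$, the composition yields that $l(x)$ is a strictly increasing function of $r(x)$. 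Both implications of the iff then fall out: the forward direction is direct, and the reverse follows by contrapositive using the same monotonicity.

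The main obstacle is conceptual rather than computational: the preceding theorem only establishes a one-sided Jensen bound, and a careless attempt to chain these bounds across two instances $x_1, x_2$ will not give order preservation (one could imagine $r(x_1) > r(x_2)$ while $l(x_1) < l(x_2)$ without contradicting either bound). The cleanest resolution is to bypass the bound entirely and rely on the pointwise strict monotonicity of $\sigma$ inside the expectation, which additionally upgrades the conclusion from weak monotonicity to the strict equivalence required by the corollary.
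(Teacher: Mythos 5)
Your proof is correct, and it takes a genuinely different — and in fact more sound — route than what the paper offers. The paper states the corollary without an explicit proof, implicitly relying on the preceding Theorem (Classification-Reward Equivalence), which establishes only the one-sided Jensen bound $l(x) \geq r(x) - C$; the accompanying remark that ``the difference is only a constant shift'' overstates what was actually proved. As you correctly observe, a one-sided inequality cannot transfer a strict ordering between two instances: $r(x_1) > r(x_2)$ together with $l(x_i) \geq r(x_i) - C$ is consistent with $l(x_1) \leq l(x_2)$, so the corollary does not follow from the theorem as written. Your argument repairs this by returning to the Classification-Preference Connection lemma in its exact form, rewriting $P(y{=}1\,|\,x) = \mathbb{E}_{j}\bigl[\sigma(r(x)-r(j))\bigr]$, and exploiting the strict pointwise monotonicity of $\sigma$ under an $x$-independent expectation over $j$, composed with the strictly increasing logit map. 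This delivers strict order preservation in both directions, which is precisely what the corollary claims. In short: the paper's Jensen-based theorem buys a lower bound useful for interpreting logits as approximate rewards, but your direct monotonicity argument is the one that actually establishes the equivalence; the two are complementary rather than redundant, and your version closes a real gap in the paper's chain of reasoning.
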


\textbf{Key Implications:}
\begin{itemize}
\item Logit outputs preserve preference ordering
\item The difference is only a constant shift
\item Binary classification effectively implements preference learning
\item Classifier logits provide valid reward signals
\end{itemize}

\section{Additional Experimental Details and Results}

\subsection{Classifier Training Data Construction Details}
\label{app: classifier training data}
We employ Top-K-Start Greedy Decoding, which explores alternative top-k tokens at the first decoding step, followed by greedy decoding for subsequent steps, to sample 10 distinct responses.
To ensure response quality, we first filter the sampled responses to remove potential repetitions of input questions. 
LLMs occasionally exhibit a pattern where they restate the original question after providing their answer.
We apply a post-processing step to extract only the solution-relevant content, ensuring each response contains purely reasoning and answer components without redundant question restatements.
After filtering, we label hidden representations from all network layers: representations corresponding to responses that are both correct and longer than 30 tokens serve as positive samples for thoughtfulness, while those from incorrect responses shorter than 30 tokens serve as negative samples. This labeling propagates through all hidden layers, creating a comprehensive training dataset at the representation level.

\subsection{Classifier Classification Performance}
\label{app:classifier analysis}
Building upon our main findings, we conduct an extensive evaluation using both Logistic Regression (LR) and Support Vector Machine (SVM) classifiers, assessed through Accuracy, F1-score, and AUC-ROC metrics. 
As shown in Figures \ref{app:LR} and \ref{app:SVM}, these complementary metrics reinforce and extend our key observations:

(1) Representation type analysis:
The distinct patterns observed across different LLMs are consistently reflected across all metrics. In Mistral-7b, hidden states maintain their superior performance across both classifiers and all evaluation metrics, with MLP and attention activations showing comparable but slightly lower performance. For Phi-1.5, the notable fluctuation in hidden states and the stable superiority of attention outputs are robustly captured by all metrics. In Gemma2-2b, attention activations consistently demonstrate the strongest discriminative power across all evaluation criteria, while hidden states and MLP outputs show substantial variations. This consistent pattern across different evaluation frameworks strengthens our observation about model-specific architectural strategies for encoding thoughtful reasoning.

(2) Layer-wise analysis:
The layer-specific trends identified in our main analysis persist across different classification approaches and metrics. Mistral-7b's progressive improvement in deeper layers is consistently observed in both LR and SVM results, regardless of the evaluation metric used. 
The more distributed patterns in Phi-1.5 and Gemma2-2b, characterized by fluctuations without clear directional trends, are similarly preserved across all evaluation frameworks. 
These consistent findings across multiple metrics provide strong validation for our conclusions about how different LLMs architecturally encode thoughtfulness features.

\subsection{Classifier logit value analysis}
\label{app:score analysis}
We conduct a detailed analysis of classifier logit value distributions across multiple language models (Mistral-7b, Gemma2-2b, Phi-1.5). 
Using both Logistic Regression and SVM classifiers, we compare the distributional patterns between the most discriminative layers (top-3 F1 scores) and least discriminative layers (bottom-3 F1 scores).

Figures \ref{app:score_gemma_short}, \ref{app:score_gemma_long} present comprehensive comparisons of classifier logit values for Gemma2-2B. Figures \ref{app:score_mist_short}, \ref{app:score_mist_long} present comprehensive comparisons of classifier logit values for Mistral-7B. Figures \ref{app:score_phi_short}, \ref{app:score_phi_long} present comprehensive comparisons of classifier logit values for Phi-1.5. 

Specifically, Figure \ref{app:score_gemma_short} compares thoughtful correct responses against intuitive responses, while Figure \ref{app:score_gemma_long} contrasts thoughtful correct responses with thoughtful incorrect responses. 
These comparisons are conducted within both top-3 and bottom-3 performing layers (ranked by F1-scores), spanning across different classifier architectures (Logistic Regression and SVM) and various representation types (attention activations, MLP activations, and hidden states), providing a thorough validation of the scoring
and ranking ability of the classifier's logit.

Notably, in attention activations, which achieve the best classification performance among all representation types, thoughtful correct responses consistently receive higher logit values than both intuitive responses and thoughtful but incorrect responses, demonstrating the robust discriminative ability of our approach. However, this clear ranking pattern is occasionally violated in MLP activations and hidden states, where thoughtful correct responses sometimes receive lower logit values than the other response types. Moreover, this ranking trend is more pronounced in top-3 performing layers compared to bottom-3 layers, suggesting that layers with stronger discriminative power better preserve the desired response quality ordering.

Similar patterns are observed in Mistral-7B and Phi-1.5 models, indicating that our trained classifiers demonstrate strong scoring and ranking capabilities across different model architectures, making them reliable guides for thought space exploration.

\begin{figure}[ht]
\vskip 0.2in
\begin{center}
\centerline{\includegraphics[scale = 0.5]{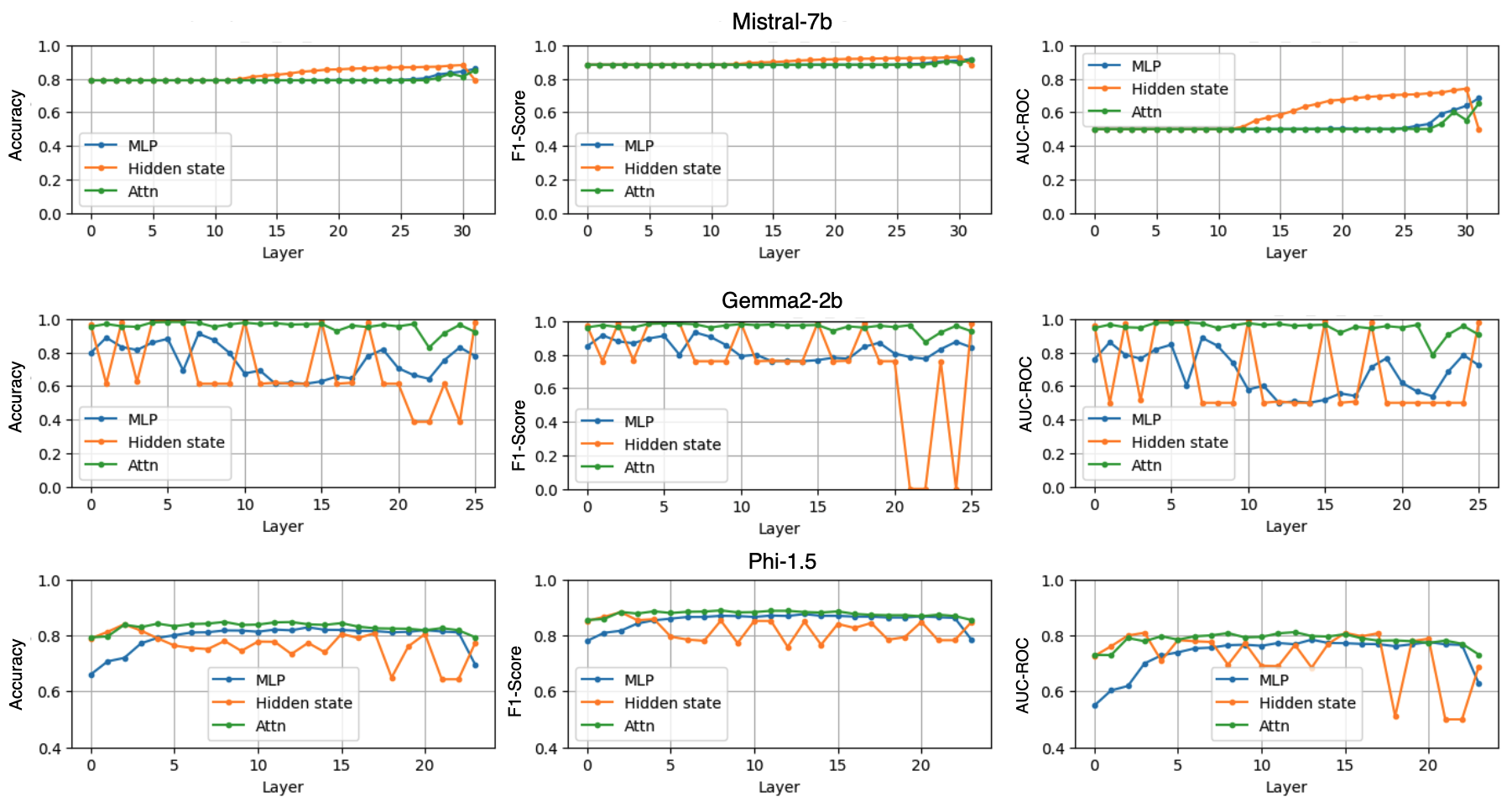}}
\vskip -0.1in
\caption{LR classification performance across LLMs and representation types}
\label{app:LR}
\end{center}
\vskip -0.2in
\end{figure}

\begin{figure}[ht]
\begin{center}
\centerline{\includegraphics[scale = 0.5]{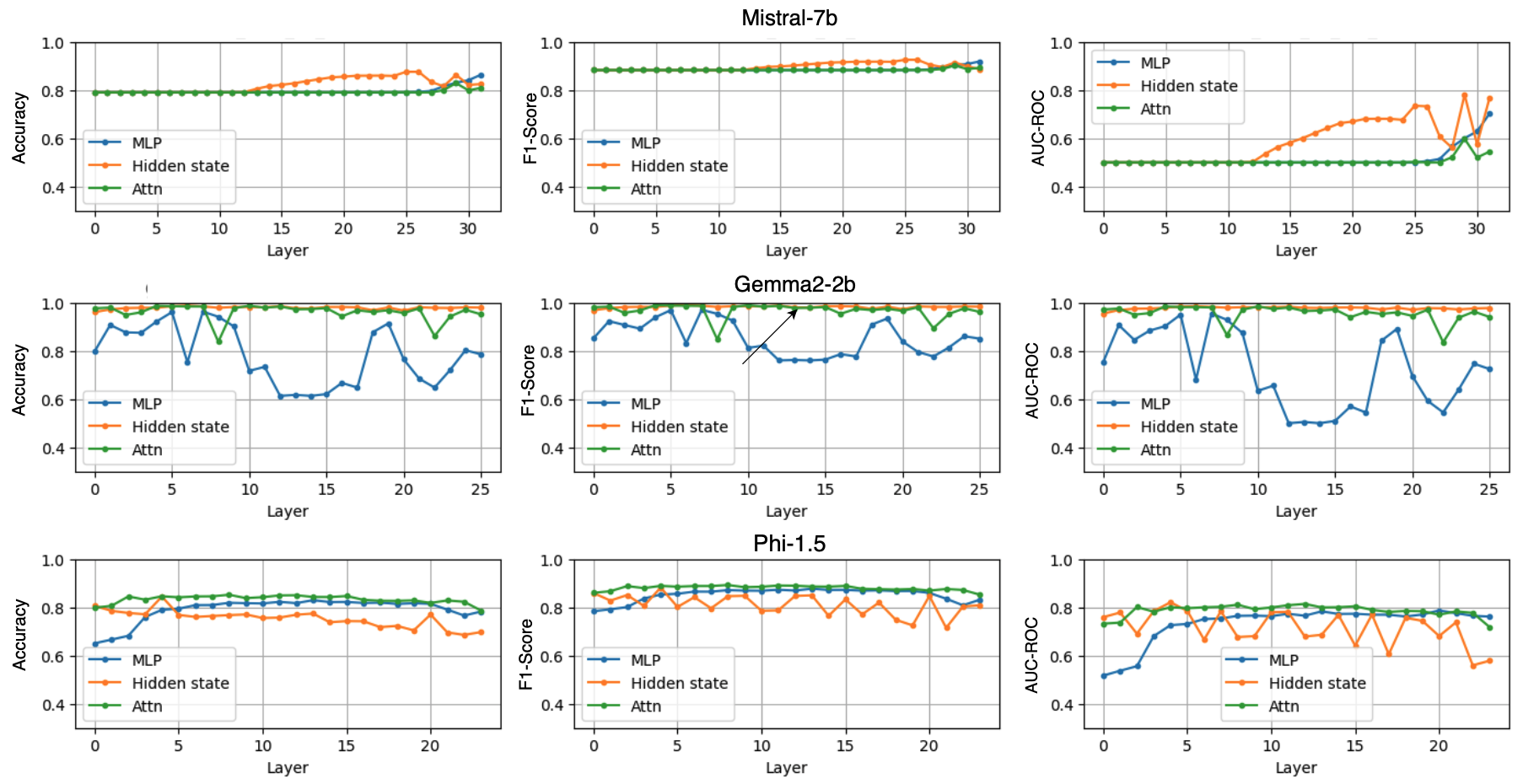}}
\vskip -0.1in
\caption{SVM classification performance across LLMs and representation types}
\label{app:SVM}
\end{center}
\vskip -0.2in
\end{figure}

\begin{figure}[ht]
\begin{center}
\centerline{\includegraphics[scale = 0.7]{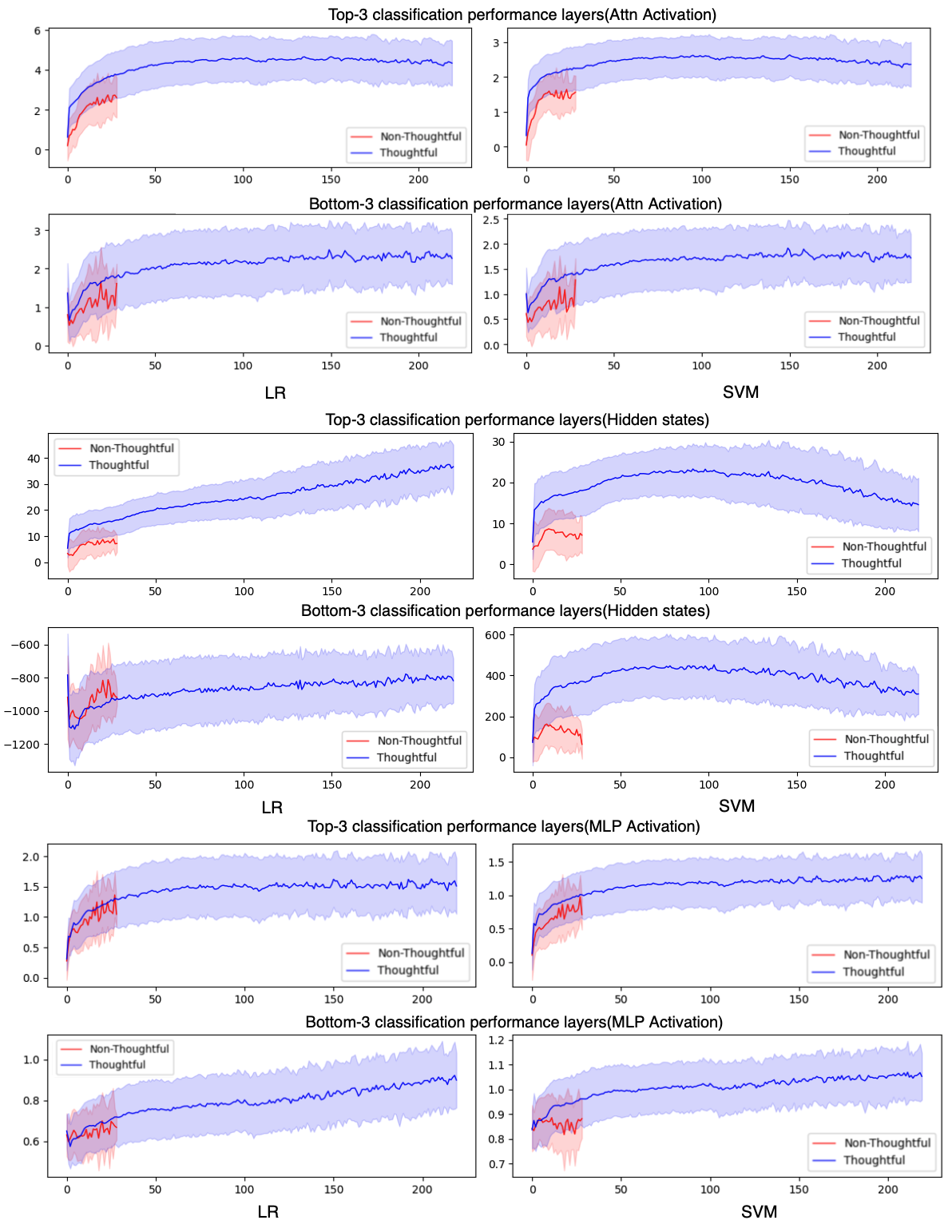}}
\vskip -0.1in
\caption{Mean logit values and variance regions in Gemma2-2b, comparing lengthy thoughtful correct responses with concise intuitive incorrect ones.}
\label{app:score_gemma_short}
\end{center}
\vskip -0.2in
\end{figure}

\begin{figure}[ht]
\begin{center}
\centerline{\includegraphics[scale = 0.67]{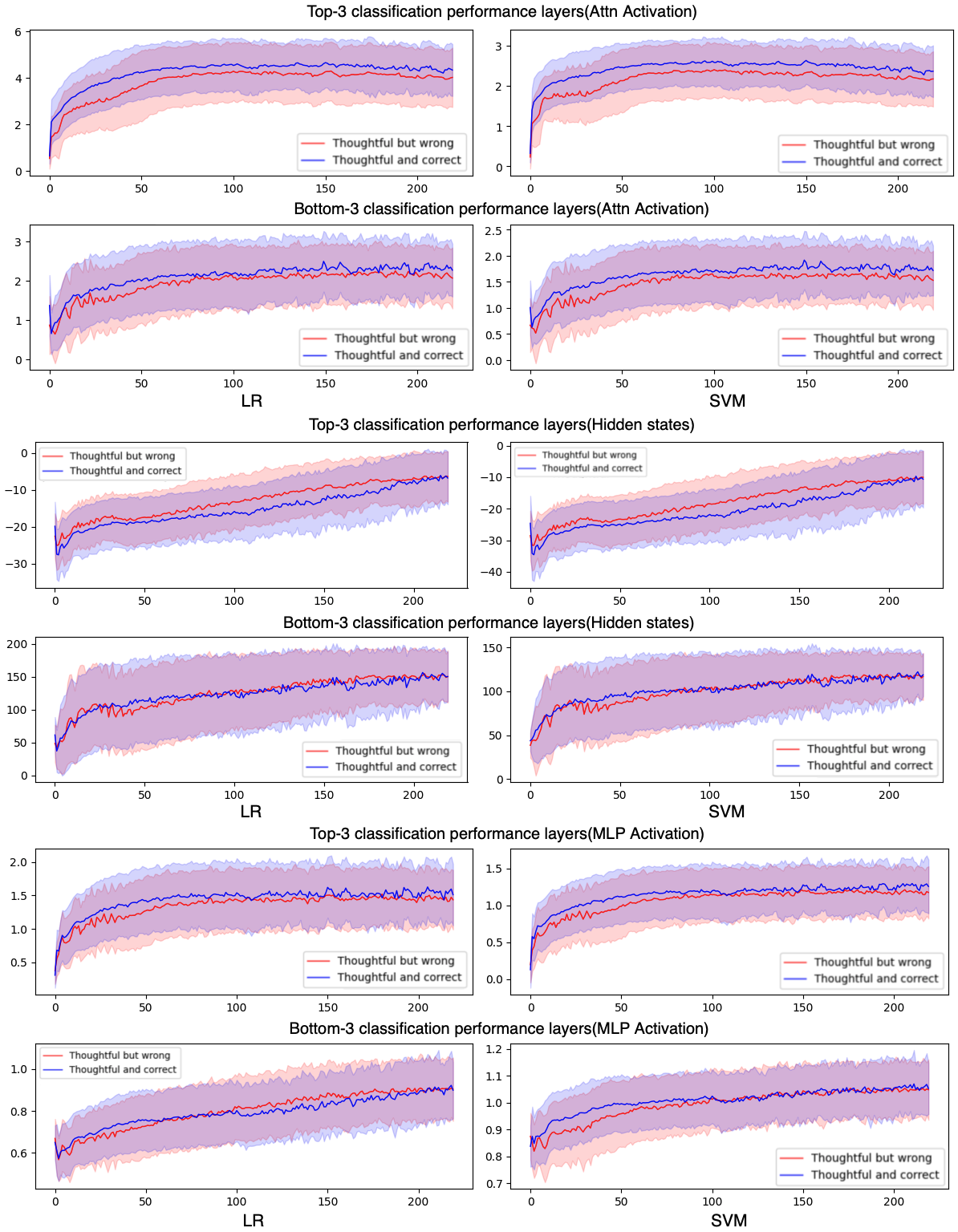}}
\vskip -0.1in
\caption{Mean logit values and variance regions in Gemma2-2b, comparing lengthy thoughtful correct responses with lengthy incorrect ones.}
\label{app:score_gemma_long}
\end{center}
\vskip -0.2in
\end{figure}

\begin{figure}[ht]
\begin{center}
\centerline{\includegraphics[scale = 0.72]{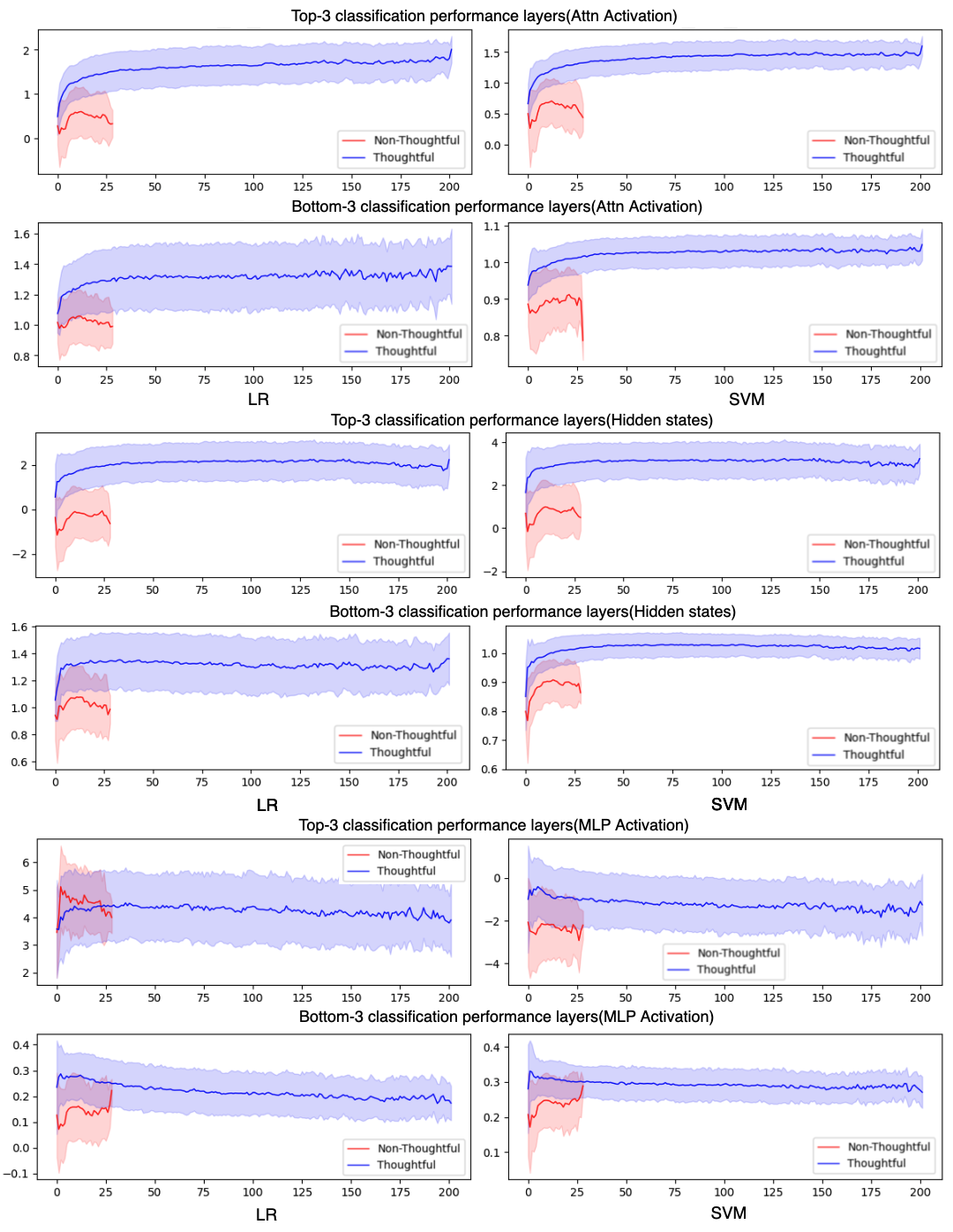}}
\vskip -0.1in
\caption{Mean logit values and variance regions in Mistral-7b, comparing lengthy thoughtful correct responses with concise intuitive incorrect ones.}
\label{app:score_mist_short}
\end{center}
\vskip -0.2in
\end{figure}

\begin{figure}[ht]
\begin{center}
\centerline{\includegraphics[scale = 0.67]{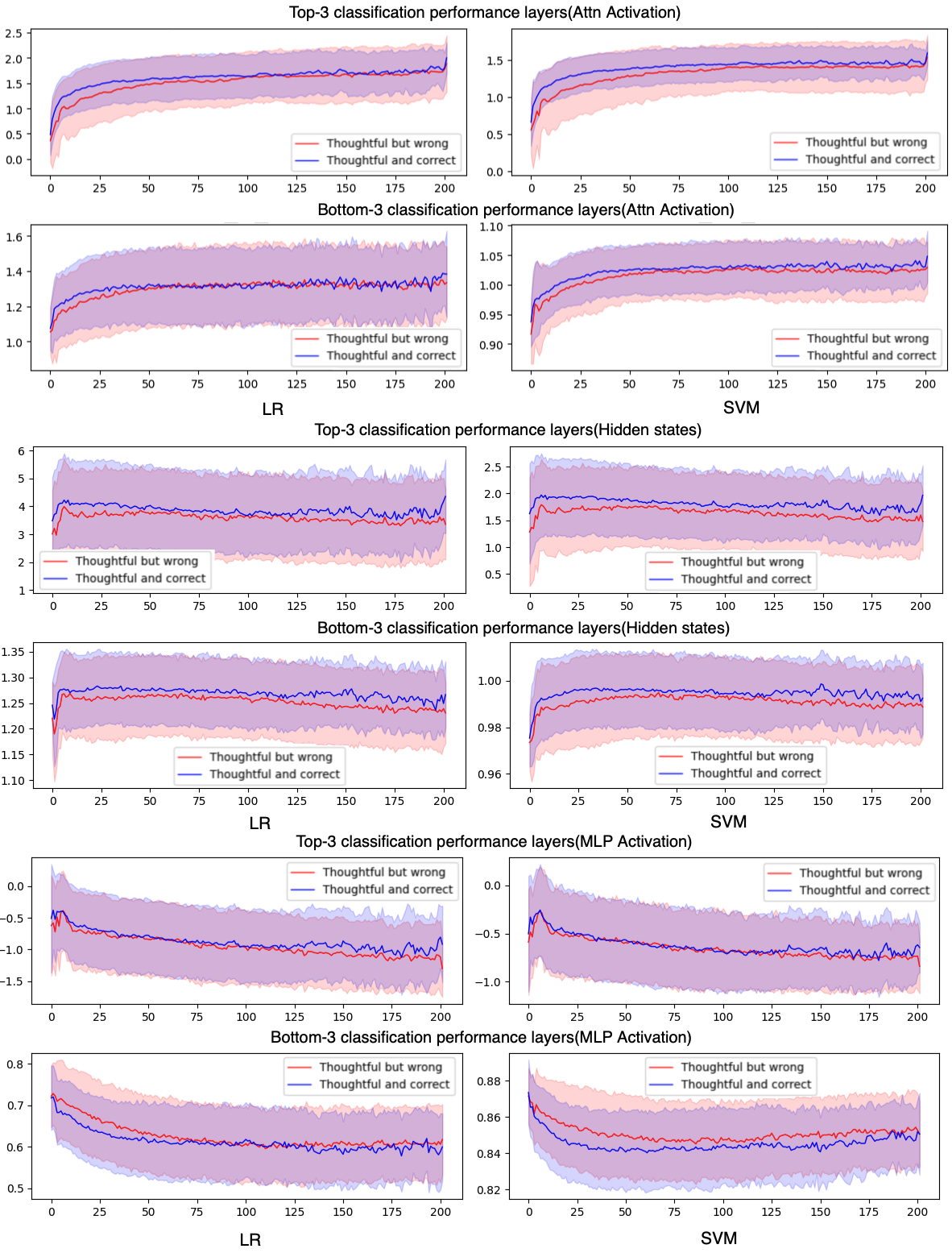}}
\vskip -0.1in
\caption{Mean logit values and variance regions in Mistral-7b, comparing lengthy thoughtful correct responses with lengthy incorrect ones.}
\label{app:score_mist_long}
\end{center}
\vskip -0.2in
\end{figure}

\begin{figure}[ht]
\begin{center}
\centerline{\includegraphics[scale = 0.6]{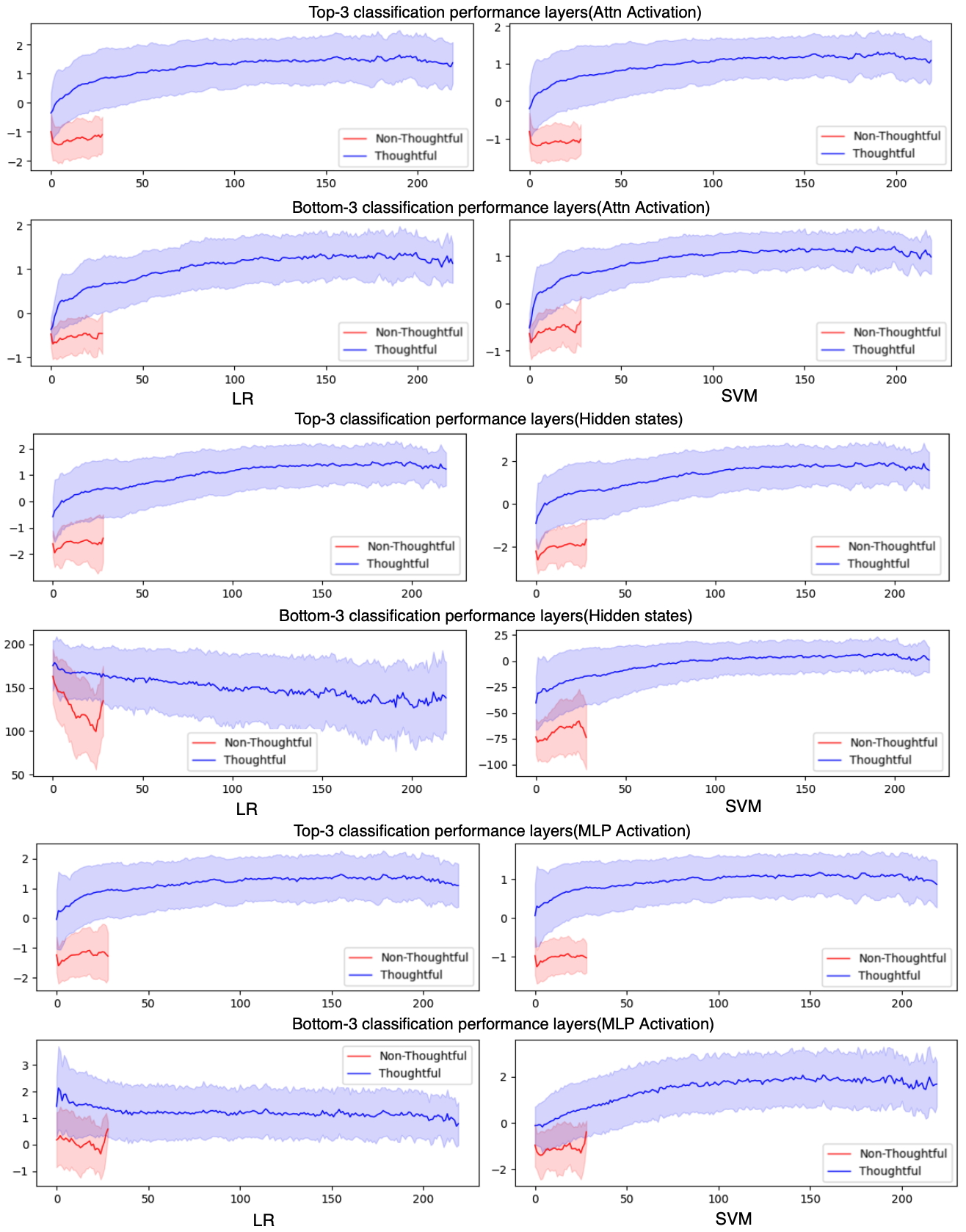}}
\vskip -0.1in
\caption{Mean logit values and variance regions in Phi-1.5, comparing lengthy thoughtful correct responses with concise intuitive incorrect ones.}
\label{app:score_phi_short}
\end{center}
\vskip -0.2in
\end{figure}

\begin{figure}[ht]
\begin{center}
\centerline{\includegraphics[scale = 0.67]{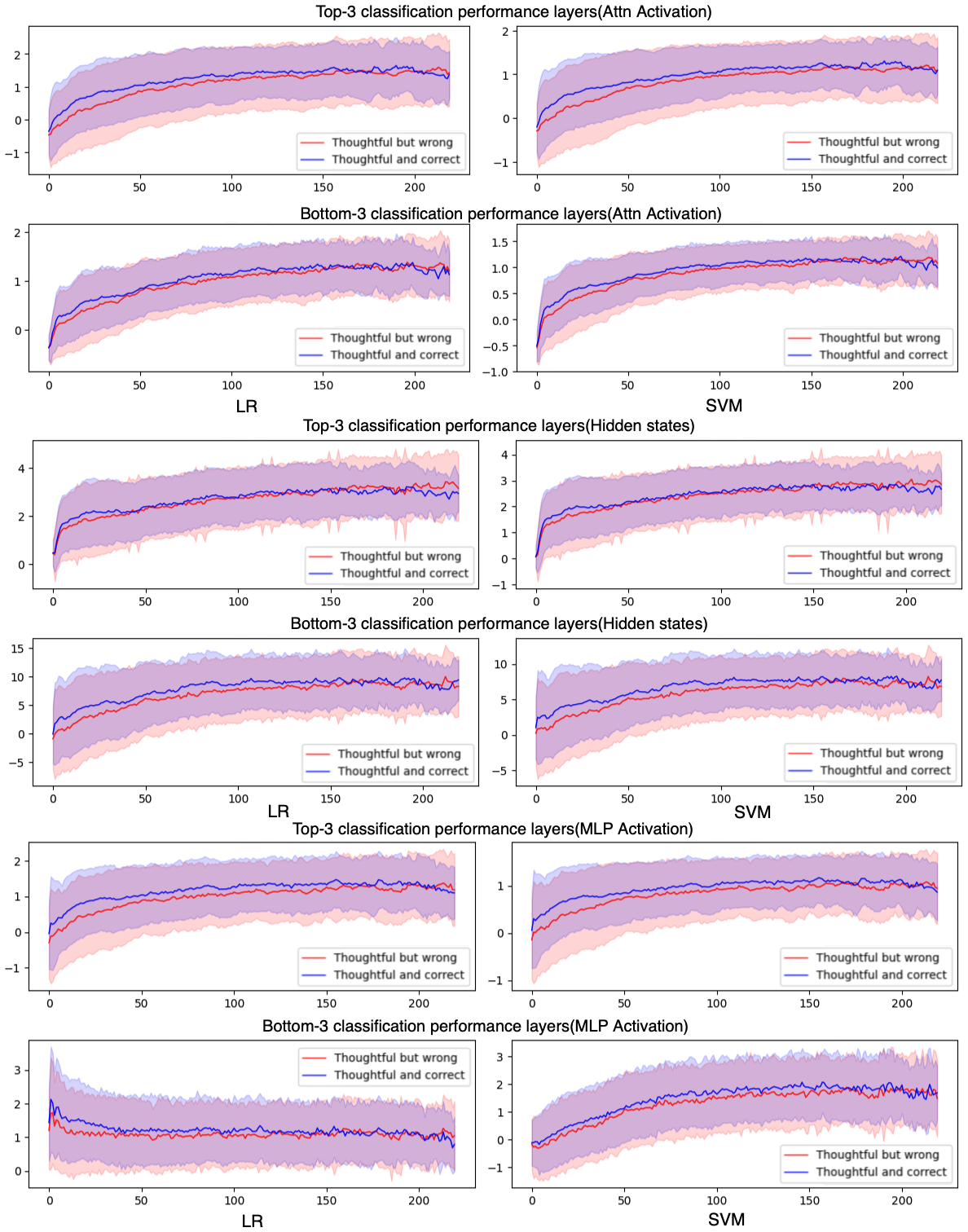}}
\vskip -0.1in
\caption{Mean logit values and variance regions in Phi-1.5, comparing lengthy thoughtful correct responses with lengthy incorrect ones.}
\label{app:score_phi_long}
\end{center}
\vskip -0.2in
\end{figure}

\end{document}